
\documentclass[a4paper]{article}


\usepackage[T1]{fontenc}
\usepackage[utf8]{inputenc}
\usepackage{adjustbox}
\usepackage{algpseudocode}
\usepackage{algorithm}
\usepackage{array}
\usepackage{amsthm}
\usepackage{amsmath}
\usepackage{amssymb}
\usepackage{color}
\usepackage{comment}
\usepackage{chngcntr}
\usepackage{dsfont} 
\usepackage[english]{babel}
\usepackage{float}
\usepackage{graphicx}
\usepackage{hhline}
\usepackage{hyperref}
\usepackage{multirow}
\usepackage{url}
\usepackage[caption=false,font=footnotesize]{subfig}
\usepackage[para,online,flushleft]{threeparttable}

\counterwithin{paragraph}{section}
\counterwithin{paragraph}{subsection}

\renewcommand{\b}[1]{{\boldsymbol{#1}}}   
\renewcommand{\L}{\La}
\renewcommand{\bf}[1]{\mathbf{#1}}
\newcommand{\G}{\mathcal{G}} 
\newcommand{\E}{\mathcal{E}} 
\newcommand{\F}{\mathcal{F}} 
\newcommand{\C}{\mathcal{C}}
\newcommand{\La}{\b{L}}
\newcommand{\bLambda}{\b{\Lambda}}
\newcommand{\V}{\mathcal{V}} 

\newcommand{\W}{\b{W}} 
\newcommand{\D}{\b{D}}
\newcommand{\T}{\mathcal{T}}

\newcommand{\Landau}{\mathcal{O}}
\newcommand{\U}{\b{U}}
\newcommand{\Uk}{\U_k}

\newcommand{\x}{\b{x}}
\newcommand{\X}{\b{X}}
\newcommand{\balpha}{\b{\alpha}}
\renewcommand{\P}{\b{P}}
\newcommand{\M}{\b{M}}
\newcommand{\lmax}{\lambda_{\rm max}}
\newcommand{\smax}{\sigma_{\rm max}}

\newcommand{\lel}{\lambda_\ell}
\newcommand{\blambda}{\b{\lambda}}

\newcommand{\Rbb}{\mathbb{R}} 

\newcommand{\Prob}[1]{\mathbb{P}\left[ #1 \right]}
\newcommand{\Esp}[1]{\mathbb{E}\left[ #1 \right]} 
\newcommand{\norm}[1]{\left\| #1 \right\|} 

\newcommand{\scp}[2]{\langle #1, #2 \rangle}

\DeclareMathOperator*{\argmin}{arg\,min}

\DeclareMathOperator*{\lkd}{LKD}
\DeclareMathOperator*{\kdd}{KDD}
\DeclareMathOperator*{\acc}{ACC}
\DeclareMathOperator*{\aci}{ACI}

\newtheorem{theorem}{Theorem} 
 \newtheorem{lemma}{Lemma}















\newcommand{\rkhs}{\ensuremath{\mathcal{H}_{\bf G}}}
\providecommand{\keywords}[1]{\textbf{\textit{Index terms---}} #1}
\setlength{\parindent}{0em}
\setlength{\parskip}{1em}
\begin{document}

\title{Compressive Embedding and Visualization using Graphs}

\author{Johan Paratte, Nathanaël Perraudin, Pierre Vandergheynst 
\thanks{
EPFL, Ecole Polytechnique Fédérale de Lausanne,
LTS2 Laboratoire de traitement du signal, CH-1015 Lausanne, Switzerland}
}



\maketitle

\begin{abstract} 
Visualizing high-dimensional data has been a focus in data analysis communities for decades, which has led to the design of many algorithms, some of which are now considered references (such as t-SNE for example). In our era of overwhelming data volumes, the scalability of such methods have become more and more important. In this work, we present a method which allows to apply any visualization or embedding algorithm on very large datasets by considering only a fraction of the data as input and then extending the information to all data points using a graph encoding its global similarity. We show that in most cases, using only $\Landau(\log(N))$ samples is sufficient to diffuse the information to all $N$ data points. In addition, we propose quantitative methods to measure the quality of embeddings and demonstrate the validity of our technique on both synthetic and real-world datasets. 
\end{abstract}

\keywords{Graph signal processing, sampling, transductive learning, embedding, visualization}



\section{Introduction}
DATA
visualization is usually equivalent to mapping high-dimensional features in low dimension using distance preserving dimensionality reduction. This process, finding a low-dimensional embedding of high-dimensional data, has drawn a lot of attention from researchers in different fields. 

Some methods are very fundamental such as Principle Componant Analysis (PCA) or Linear Discriminant Analysis (LDA). Other well known methods use the hypothesis that the data can be well approximated by a low-dimensional manifold, such as Laplacian Eigenmaps \cite{belkin2003laplacian}, Isomap \cite{tenenbaum2000global} or Local Linear Embedding (LLE) \cite{roweis2000nonlinear}. Another approach is to use a probabilistic model of both the high-dimensional and low-dimensional data distribution and optimize the distance preservation using the joint model. Examples of this approach are Stochastic Neighbor Embedding (SNE) \cite{hinton2002stochastic} and its popular extention t-SNE \cite{maaten2008visualizing}  or LargeVis \cite{tang2016visualizing}. We refer the interested reader to this work \cite{van2009dimensionality}, offering a comparative of numerous dimensionality reduction techniques.

From all those methods, two main pitfalls are the most prevalent. The first one is the lack of robustness to noisy real-world data and the second is bad scalability leading to unmanageable computing time for large datasets.

The first problem often arises when applying a global scheme which will work well on toy examples and fail on complex data, as the expected global model is only partially valid. A simple example would be the different results of Laplacian Eigenmaps which will yield the recovery of a perfect embedding for the Swissroll point cloud and poor results on large-scale complex and noisy data. This problem is traditionally mitigated by considering hypotheses on data to hold only locally, leading to techniques such as LLE, SNE and others. 

The second, more important, issue of scalability is essential in todays world of ubiquitous and overwhelming data. It is even more crucial now that the increase in data creation cannot be well compensated by the physical limits unsettling Moore's law. Essentially, this fundamental issue of scalibility is related to the notion of similarity. Indeed, the essential question one must be able to answer to represent data in low dimension is one of similarity : which data points are close to each other. This issue can be said to be fundamental because it naturally implies that the minimal complexity can only be super-linear, since one pass over each datapoint cannot be sufficient to infer a similarity matrix with a quadratic number of entries. Some of the popular methods mentioned above do have an intrinsic quadratic regime and parallelized or approximated variants that scale better, but at a cost. An illustrative example is t-SNE which is $\Landau(N^2)$ in its original implementation and is mostly used with an approximated and accelerated version (Barnes-Hut t-SNE \cite{van2014accelerating}) in $\Landau(N\log(N))$. 

As we saw, the two issues mentioned above are related to the concepts of locality and similarity. Expressing both notions naturally leads to the concept of a similarity graph whose edges link the closest points, weighted by the distance between them. This general idea is actually one of the most used tool when computing embeddings, either explicitly in methods such as Laplacian Eigenmaps or LargeVis, or implicitly, using probability distributions as random walk matrices (e.g. SNE). Of course, constructing a similarity graph has the same complexity issue as the one mentioned above. This is why approximated sparse nearest-neighborgs graphs are often used in practice, as they can be computed very efficiently using Approximated Nearest Neighbors (ANN) techniques (e.g. FLANN \cite{flann_pami_2014}). 

In this work, we propose a general framework for accelerating any embedding algorithm using a graph encoding the data similarity. Our technique is supported by modern tools of Graph Signal Processing allowing to use the graph at both local and global scales. The main idea is to use only a subset of the data on which to apply an embedding algorithm and then diffuse the information using the graph. Our main contribution which we call Compressive Embedding (CE) is made possible by two complementary mechanisms : a graph sampling scheme to create the sketch and diffusion routines to extend the information on the sketch to all data points.  

\paragraph{Contributions}
Below we summarize the main contributions of this work :
\begin{itemize}
\item graph sampling schemes and theorems stating the minimum number of samples necessary to capture energy everywhere
\item transductive learning algorithms to extend the embedding information computed on the samples to all datapoints using localized low pass graph filters
\item new quantitative measures of the quality of the visualizations based on graph cuts and localized filters 
\item experiments on synthetic and real data sets showing the superior scalability of this method compared to the state-of-the-art
\end{itemize}

\paragraph{Organization}
The paper is organized as follows. In Section~\ref{sec:background}, we recall the fundamentals of graph signal processing and define the notations. Section~\ref{sec:randomsampling} develops the results on our sampling method based on the energy of localized kernels. Section~\ref{sec:metrics} uses localized filters to define generalized metrics used in the following sections. Section ~\ref{sec:transductive} describes the different methods to extend the information from the sampled nodes to all data points. Section~\ref{sec:embedquality} describes our proposed methods to compute a quantitative measure of the quality of embeddings. In Section~\ref{sec:experiments}, we show the validity and benefits of our method and compare with the state-of-the-art through several experiments. Finally, Section~\ref{sec:conclusion} proposes interesting open problems in the domain as well as potential future work to address.

\section{Background} \label{sec:background}

\paragraph{Graph nomenclature}
Let us define $\G = (\V, \E, \W)$ as an undirected weighted graph where $\V$ is the set of vertices and $\E$ the set of edges representing connections between nodes in $\V$. The vertices $v \in \V$ of the graph are ordered from $1$ to $N=|\V|$. The matrix $\W$, which is symmetric and positive, is called the weighted adjacency matrix of the graph $\G$. The weight $\W_{ij}$ represents the weight of the edge between vertices $v_i$ and $v_j$ and a value of 0 means that the two vertices are not connected. The degree $d(i)$ of a node $v_i$ is defined as the sum of the weights of all its edges $d(i)=\sum_{j=1}^N \W_{ij}$. Finally, a graph signal is defined as a vector of scalar values over the set of vertices $\V$ where the $i$-th component of the vector is the value of the signal at vertex $v_i$.

\paragraph{Spectral theory}
The combinatorial Laplacian operator $\L$ can be defined from the weighted adjacency matrix as $\La = \mathbf{D}-\W$ with $\mathbf{D}$ being the degree matrix defined as a diagonal matrix with $\D_{ii}=d(i)$. One alternative and often used Laplacian definition is the normalized Laplacian $\La_n = \mathbf{D}^{-\frac{1}{2}} \La \mathbf{D}^{-\frac{1}{2}} = \mathbf{I} - \mathbf{D}^{-\frac{1}{2}} \W \mathbf{D}^{\frac{1}{2}}$. Since the weight matrix $\W$ is symmetric positive semi-definite, so is $\L$ by construction. By application of the spectral theorem, we know that $\L$ can be decomposed into an orthonormal basis of eigenvectors noted $\{ \mathbf{u}_\ell \}_{\ell=0, 1,\ldots, N-1}$. The ordering of the eigenvectors is given by the eigenvalues noted $\{ \lambda_\ell \}_{\ell=0, 1,\dots, N-1}$ sorted in ascending order $0=\lambda_0 \leq \lambda_1 \leq \lambda_2 \leq \ldots \leq \lambda_{N-1} = \lambda_{\rm max}$. In a matrix form we can write this decomposition as $\L = \U\Lambda \U^*$ with $\U = (\mathbf{u}_1 | \mathbf{u}_2 | \ldots | \mathbf{u}_{N-1} )$ the matrix of eigenvectors and $\Lambda$ the diagonal matrix containing the eigenvalues in ascending order. Given a graph signal $\x$, its graph Fourier transform is thus defined as $\hat{\x} = \F (\x) = \U^*\x$, and the inverse transform $\x = \F^{-1}(\hat{\x})=\U\hat{\x}$. It is called a Fourier transform by analogy to the continuous Laplacian whose spectral components are Fourier modes, and the matrix $\U$ is sometimes referred to as the graph Fourier matrix (see e.g., \cite{chung1997spectral}). By the same analogy, the set $\{ \sqrt{\lambda_\ell} \}_{\ell=0, 1,\ldots, N-1}$ is often seen as the set of graph frequencies~\cite{shuman2013vertex}.

\paragraph{Graph filtering}
In traditional signal processing, filtering can be carried out by a pointwise multiplication in Fourier. Thus, since the graph Fourier transform is defined, it is natural to consider a filtering operation on the graph using a multiplication in the graph Fourier domain. To this end, we define a graph filter as a continuous fonction $g:\Rbb_+ \rightarrow \Rbb$ directly in the graph Fourier domain. If we consider the filtering of a signal $\x$, whose graph Fourier transform is written $\hat{\x}$, by a filter $g$ the operation in the spectral domain is a simple multiplication $\hat{\x'}[\ell] =  g(\lambda_\ell) \cdot \hat{\x}[\ell]$, with $\x'$ and $\hat{\x'}$ the filtered signal and its graph Fourier transform respectively. Using the graph Fourier matrix to recover the vertex-based signals we get the explicit matrix formulation for graph filtering:
$$\x' = \U g(\Lambda) \U^* \x ,$$
where $g(\Lambda) = \text{diag}(g(\lambda_0), g(\lambda_1), \ldots, g(\lambda_{N-1}))$. The graph filtering operator $g(\L) := \U g(\Lambda) \U^*$ is often used to reformulate the graph filtering equation as a simple vector-matrix operation $\x' = g(\La) \x$.

Since the filtering equation defined above involves the full set of eigenvectors $\U$, it implies the diagonalization of the Laplacian $\La$ which is costly for large graphs. To circumvent this problem, one can represent the filter $g$ as a polynomial approximation, since polynomial filtering only involves the multiplication of the signal by a power of $\La$ of the same order as the polynomial. Filtering using good polynomial approximations can be done using Chebyshev or Lanczos polynomials \cite{hammond2011wavelets, susnjara2015accelerated}.

\paragraph{Localization operator}

The concept of translation, which is well defined in traditional signal processing cannot be directly applied to graphs, as they can be irregular. However, inspired by the notion of translation, we can define the localization of a function $g$ defined on the graph spectrum as a convolution with a Kronecker delta $ \widehat{\T_ig[\ell]} = g(\lel) \cdot \hat{\delta_i} = g(\lel) \cdot \mathbf{u}_\ell[i] $, where $\T$ is called the localization operator, and $ \T_i $ means localization at vertex $i$. Going back to the vertex domain, we get :

\begin{equation*}
\label{def:localization_operator}
\T_i g [n] = \mathcal{F}^{-1} \left( g \cdot \hat{\delta_i} \right)[n] = \sum_{\ell=0}^{N-1} g(\lambda_\ell) \mathbf{u}^*_\ell[i] \mathbf{u}_\ell[n] = \left(g(\La)\right)_{in}.
\end{equation*}

The reason for calling $\T_i$ a localization operator comes from the fact that for smooth functions $g$, $\T_ig$ is localized around the vertex $i$. The proof of this result and more information on the localization operator can be found in \cite{shuman2016vertex}. The localization of filters is quite naturally called atoms as a filtering operation of a signal $\x$ using a filter $g$ can be expressed as $\x'[i] = \scp{\x}{\T_ig}$. 

\paragraph{Additional notation}
We use $\norm{\b{A}}_{op}=\sup_{\x\neq0}\frac{\norm{\b{A}\x}_2}{\norm{\x}_2}$ for the induced norm of the matrix $\b{A}$ and $\norm{\b{A}}_F=\sqrt{\sum_i\sum_j \b{A}_{ij}}$ for the Froebenius norm. The maximum eigenvalue of a matrix is written $\smax(\b{A})$. 

We reserve the number notation for vectors. For example, we write the $\ell_2$ Euclidean norm as $\norm{\x}_2=\sqrt{\sum_i\x_i}$ and the $\ell_\infty$ uniform (sup) norm $\norm{\x}_\infty=\max_i|\x_i|$. We abusively use the $\ell_0$ to count the number of non-zero elements in a vector. Furthermore, when an univariate function $g$ is applied to a vector $\blambda$, we mean $[g(\blambda)]_i=g(\blambda_i)$. As a result, $\norm{g(\blambda)}_0=k$ is the number of eigenvalues where $g(\lambda_\ell) \neq 0$.

Given a kernel $g$, we define $\Uk$ as a $N\times k$ matrix made of the $k$ columns of $\U$ where $g(\lambda_\ell) \neq 0$. Similarly, we denote $\bLambda_k$ the $k\times k$ diagonal matrix containing the associated eigenvalues. Note that we have
\begin{equation*}
g(\L) = \U g(\bLambda) \U^* = \U_k g(\bLambda_k) \U_k^* = \Uk\Uk^*g(\L).
\end{equation*} 


\section{Random sampling on graphs}
\label{sec:randomsampling}

In this section, we first define a graph sampling schemes and then prove related theoretical limits. In particular, it is of particular interest to understand the number of samples needed in order to diffuse energy on every node by localizing filters on the samples. We will prove that the number of samples needed is direclty linked with the rank of the filter. 

\subsection{Adaptive sampling scheme} \label{sec:adapted_sampling_scheme}

Let us define the probability distribution $\mathcal{P}$ represented by
a vector $\b{p}\in\Rbb^{N}$. We use two different sampling schemes.
Uniform sampling is given by the probability vector
\begin{equation*}
\b{p}_{i}=\frac{1}{N},
\end{equation*}
and adapted sampling is given by 
\begin{equation*}
\b{p}_{i}=\frac{\norm{\T_{i}g}_{2}^{2}}{\norm{g(\b{\lambda})}_{2}^{2}}.
\end{equation*}
Remember that we have $\sum_{i}\|\T_{i}g\|_{2}^{2}=\|g\|_{2}^{2}$
, implying that $\sum_{i}p_{i}=1$. Let us associate the matrix
\begin{equation*}
P:=\mbox{diag}(p)\in\Rbb^{N\times N}
\end{equation*}
 to $p.$

Then, we draw independently (with replacement) $M$ indices $\Omega:=\{\omega_{1},\dots,\omega_{M}\}$
from the set $\{1,\dots,N\}$ according to the probability distribution
$\b{p}$. We have
\begin{equation*}
\Prob{\omega_{j}=i}=\b{p}_{i},\hspace{1em}\forall i\in\{1,\dots,N\},\hspace{1em}\forall j\in\{1,\dots,M\}.
\end{equation*}
For any signal $\b{x}\in\Rbb^{N},$ defined on the vertices of the
graph, its sampled version $\b{y}\in\Rbb^{M}$ satisfies
\begin{equation*}
\b{y}_{j}:=\b{x}_{\omega_{j}}\quad\forall j\in\{1,\dots,M\}.
\end{equation*}

Finally, the downsampling matrix $M\in\Rbb^{M\times N}$ is defined as
\begin{equation*}
\M_{ij}=\begin{cases}
1 & \mbox{if }i=\omega_{j}\\
0 & \text{otherwise,}
\end{cases}
\end{equation*}
for all $i\in\{1,\dots,N\}$ and $j\in\{1\text{,}\dots,M\}.$ Note
that $\b{y}=\M\b{x}$. 

\subsection{Embedding Theorems}

The first theorem shows that given enough samples, the random projection $\M\P^{-\frac{1}{2}}g(\L)\x$ conserves the energy contained in $g(\L)\x$. In this sense, given enough samples, it is an embedding of $g(\L)\x$.
\begin{theorem} \label{theo:sampling_gL}
Given a graph $\G$ and a kernel $g$ with a given rank $\norm{g(\blambda)}_{0}=k$, given $\delta>0$ and using the sampling scheme of Section~\ref{sec:adapted_sampling_scheme}, if 
\[
M\geq2\frac{1}{\delta^{2}}\frac{\norm{g(\blambda)}_{2}^{2}}{\norm{g(\blambda)}_{\infty}^{2}}\left(1+\frac{\delta}{3}\right)\log\left(\frac{2k}{\epsilon}\right)
\]
we have with a probability of $1-\epsilon$ for all $\x$: 
\begin{equation}
\left|\frac{\frac{1}{M}\norm{\M\P^{-\frac{1}{2}}g(\L)\x}_{2}^{2}-\norm{g(\L)\x}_{2}^{2}}{\norm{g(\blambda)}_{\infty}^{2}}\right|\leq\delta\|\U_{k}^{*}\x\|_{2}^{2}\leq\delta\|\x\|_{2}^{2}.
\end{equation}
\end{theorem}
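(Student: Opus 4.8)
The plan is to recognize the sampled energy as a quadratic form in a sum of independent rank-one random matrices and then control its deviation from the mean with a matrix concentration inequality. First I would fix $\x$ and expand
\[
\frac{1}{M}\norm{\M\P^{-\frac{1}{2}}g(\L)\x}_{2}^{2}=\x^{*}\Big(\tfrac{1}{M}\textstyle\sum_{j=1}^{M}\b{X}_{j}\Big)\x,\qquad \b{X}_{j}:=\b{p}_{\omega_{j}}^{-1}(\T_{\omega_{j}}g)(\T_{\omega_{j}}g)^{*},
\]
using the key identity $g(\L)\b{e}_{i}=\T_{i}g$ (the $i$-th column of $g(\L)$, valid since $g(\L)$ is symmetric and $(\T_i g)[n]=(g(\L))_{in}$). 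The $\b{X}_j$ are i.i.d., and since $\Esp{\b{X}_{j}}=\sum_{i}\b{p}_i\,\b{p}_i^{-1}(\T_i g)(\T_i g)^{*}=\sum_{i}(\T_{i}g)(\T_{i}g)^{*}=g(\L)^{2}$, the estimator is unbiased. So it suffices to control $\tfrac1M\sum_j\b{X}_j-g(\L)^2$.

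Because every $\T_{i}g$ lies in the range of $\Uk$, both $\tfrac1M\sum_j\b{X}_j$ and $g(\L)^{2}=\Uk g(\bLambda_k)^2\Uk^*$ are supported on that $k$-dimensional subspace. I would therefore pass to the reduced $k\times k$ matrices $\b{Y}_{j}:=\Uk^{*}\b{X}_{j}\Uk=\b{p}_{\omega_{j}}^{-1}\b{v}_{\omega_{j}}\b{v}_{\omega_{j}}^{*}$, where $\b{v}_{i}:=\Uk^{*}\T_{i}g$ and $\Esp{\b{Y}_{j}}=g(\bLambda_{k})^{2}$. Writing $\b{a}=\Uk^{*}\x$ and $\tfrac1M\sum_j\b{X}_j-g(\L)^2=\Uk\big(\tfrac1M\sum_j\b{Y}_j-g(\bLambda_k)^2\big)\Uk^*$, the whole claim reduces to the operator-norm bound $\norm{\tfrac1M\sum_j\b{Y}_{j}-g(\bLambda_{k})^{2}}_{op}\le\delta\norm{g(\blambda)}_{\infty}^{2}$: evaluating the quadratic form gives $|\x^*(\cdots)\x|\le\delta\norm{g(\blambda)}_\infty^2\norm{\Uk^*\x}_2^2$, which is the middle inequality, and $\norm{\Uk^{*}\x}_{2}\le\norm{\x}_{2}$ gives the last one.

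Next I would apply the matrix Bernstein inequality to the centered summands $\b{S}_{j}=\tfrac1M(\b{Y}_{j}-\Esp{\b{Y}_{j}})$. Here the adapted sampling does the real work: $\b{p}_{i}^{-1}\norm{\b{v}_{i}}_{2}^{2}=\b{p}_{i}^{-1}\norm{\T_{i}g}_{2}^{2}=\norm{g(\blambda)}_{2}^{2}$ is \emph{constant}, so $\norm{\b{Y}_{j}}_{op}=\norm{g(\blambda)}_{2}^{2}$ deterministically. Using that $\norm{\b{A}-\b{B}}_{op}\le\max(\norm{\b{A}}_{op},\norm{\b{B}}_{op})$ for positive semidefinite $\b{A},\b{B}$ together with $\norm{g(\blambda)}_{\infty}^{2}\le\norm{g(\blambda)}_{2}^{2}$, this yields the uniform bound $\norm{\b{S}_{j}}_{op}\le R:=\tfrac1M\norm{g(\blambda)}_{2}^{2}$. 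The same constant collapses $\Esp{\b{Y}_{j}^{2}}=\norm{g(\blambda)}_{2}^{2}\,g(\bLambda_{k})^{2}$, giving $\norm{\sum_j\Esp{\b{S}_{j}^{2}}}_{op}\le\sigma^{2}:=\tfrac1M\norm{g(\blambda)}_{2}^{2}\norm{g(\blambda)}_{\infty}^{2}$. Plugging $t=\delta\norm{g(\blambda)}_{\infty}^{2}$ into the tail $2k\exp\!\big(-\tfrac{t^{2}/2}{\sigma^{2}+Rt/3}\big)$ and requiring it to be at most $\epsilon$ reproduces exactly the stated threshold on $M$, the factor $2k$ coming from the reduced dimension.

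The \emph{main obstacle} is the dimension reduction: one must check carefully that all the random matrices are supported on $\mathrm{range}(\Uk)$, so that the Bernstein tail factor is $2k$ (the rank) rather than $2N$ — this is precisely what turns the sampling requirement into $\Landau(\log(\cdot))$. The remaining delicate point is the variance computation, where the constants $\tfrac12$, $(1+\delta/3)$, and the ratio $\norm{g(\blambda)}_2^2/\norm{g(\blambda)}_\infty^2$ must line up exactly with the Bernstein bound. A minor bookkeeping issue is the well-definedness of $\P^{-1/2}$: since indices are drawn according to $\b{p}$, only vertices with $\b{p}_{i}>0$ (i.e. $\norm{\T_{i}g}_{2}>0$) ever occur, so each $\b{p}_{\omega_{j}}^{-1}$ is finite.
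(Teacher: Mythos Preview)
Your proposal is correct and follows essentially the same strategy as the paper: reduce to the $k$-dimensional range of $\Uk$, write the error as a quadratic form in a sum of centered i.i.d.\ rank-one matrices, and apply the matrix Bernstein inequality (Tropp). The only cosmetic differences are that the paper carries out the reduction by conjugating with $g(\bLambda_k)\Uk^*$ from the outset (so its random summands are $\tfrac{1}{M}g(\bLambda_k)\Uk^*(\b{p}_{\omega_i}^{-1}\b{\delta}_{\omega_i}\b{\delta}_{\omega_i}^*-\b{I})\Uk g(\bLambda_k)$, already centered), and that it first bounds $\smax(\pm\X_i)$ and the variance in terms of the generic quantity $\max_i\norm{\T_i g}_2^2/\b{p}_i$ before specializing to the adapted scheme, whereas you plug in $\b{p}_i=\norm{\T_i g}_2^2/\norm{g(\blambda)}_2^2$ immediately to make $\norm{\b{Y}_j}_{op}$ constant; the resulting constants and final threshold on $M$ coincide.
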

Note that the above expression is normalized by $\norm{g(\blambda)}_{\infty}^{2}$ in order to remove the scaling factor of the kernel $g$.

Let us now analyze the most important term of the bound: 
\begin{equation}
\frac{\norm{g(\blambda)}_{2}^{2}}{\norm{g(\blambda)}_{\infty}^{2}} = \frac{\sum_\ell g^2(\lel)}{\max_\ell g^2(\lel)}.
\end{equation}
It is a measure of concentration of the kernel on its support. It is maximized with the value $\frac{\norm{g(\blambda)}_{2}^{2}}{\norm{g(\blambda)}_{\infty}^{2}}=k$ when $g$ is a rectangle.
In general, it will be small for concentrated kernels. For example, a rapidly decreasing kernel such as the heat kernel ($g(x)=e^{-x\tau}$) will lead to a very small ratio. 

Note that contrarily to almost all bound available in the literature this bound does not require the kernel to be low rank but only concentrated.
For a comparison~\cite[Corollary 2.3]{puy2016random} requires
\begin{equation*}
M \geq \frac{3}{\delta^2} k \log\left(\frac{2k}{\epsilon}\right).
\end{equation*}

\paragraph{Optimality of the sampling scheme.} Although we have no formal proof of optimality, the sampling scheme presented in Section~\ref{sec:adapted_sampling_scheme} is a good candidate. Indeed, when reading the proof of Theorem~\ref{theo:sampling_gL}, the reader may notice that it minimizes the number of samples $M$.

Building on top of Theorem~\ref{theo:sampling_gL}, we establish a lower bound on the number of samples required by Algorithm~\ref{algo:ce} to capture enough information from each node with a given confidence level. It will ensure that the information diffused from the samples can reach all nodes.

\begin{theorem} \label{theo:sampling_tig_exact}
Using the sampling scheme described in Section~\ref{sec:adapted_sampling_scheme}, for $\delta>0$, a graph $\G$ and a kernel $g$ such that $\norm{g(\blambda)}_{0}=k$, each node $i$ is guaranteed with a probability
$1-\epsilon$ to have
\[
\frac{\frac{1}{M}\norm{\M\P^{-\frac{1}{2}}\T_{i}g}_{2}^{2}}{\norm{\T_{i}g}_{2}^{2}}\geq1-\delta,
\]
given that the number of samples satisfies
\[
M\geq\frac{2a}{\delta^{2}}\left(1+\frac{\delta}{3}\right)\log\left(\frac{k}{\epsilon}\right),
\]
where $a = \frac{\norm{g(\blambda)}_{2}^{2}\norm{g(\blambda)}_{\infty}^{2}\norm{\Uk^{*}\b{\delta}_{i}}_{2}^{4}}{\norm{\T_{i}g}_{2}^{4}}$.
\end{theorem}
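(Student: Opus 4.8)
The plan is to obtain this node-wise guarantee as a direct specialization of Theorem~\ref{theo:sampling_gL} to the signal $\x=\b{\delta}_i$, after a careful rescaling of the tolerance parameter. The starting point is the identity $\T_i g = g(\L)\b{\delta}_i$: by the definition of the localization operator, $\T_i g[n] = (g(\L))_{in}$, so the atom $\T_i g$ is exactly the $i$-th column of the symmetric operator $g(\L)$, i.e. $\T_i g = g(\L)\b{\delta}_i$. Consequently $\M\P^{-\frac12}\T_i g = \M\P^{-\frac12}g(\L)\b{\delta}_i$, and the quantity we must control is precisely the random projection of Theorem~\ref{theo:sampling_gL} evaluated at the Kronecker delta $\b{\delta}_i$.

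First I would invoke Theorem~\ref{theo:sampling_gL} with $\x=\b{\delta}_i$ and a tolerance $\delta'$ to be fixed later, which gives, with probability $1-\epsilon$,
\begin{equation*}
\left|\tfrac{1}{M}\norm{\M\P^{-\frac12}\T_i g}_2^2-\norm{\T_i g}_2^2\right|\le \delta'\,\norm{g(\blambda)}_\infty^2\,\norm{\Uk^*\b{\delta}_i}_2^2 .
\end{equation*}
The conclusion I want is a relative lower bound with respect to $\norm{\T_i g}_2^2$, whereas the bound above is additive and normalized by $\norm{g(\blambda)}_\infty^2\norm{\Uk^*\b{\delta}_i}_2^2$. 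The natural move is to absorb this mismatch into $\delta'$ by setting
\begin{equation*}
\delta'=\delta\,\frac{\norm{\T_i g}_2^2}{\norm{g(\blambda)}_\infty^2\,\norm{\Uk^*\b{\delta}_i}_2^2},
\end{equation*}
so that the additive error $\delta'\norm{g(\blambda)}_\infty^2\norm{\Uk^*\b{\delta}_i}_2^2$ equals $\delta\norm{\T_i g}_2^2$; retaining only the lower deviation then yields exactly $\tfrac{1}{M}\norm{\M\P^{-\frac12}\T_i g}_2^2\ge(1-\delta)\norm{\T_i g}_2^2$.

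It then remains to translate the sample count. Substituting this $\delta'$ into the requirement of Theorem~\ref{theo:sampling_gL} and simplifying, the factor $\tfrac{2}{(\delta')^{2}}\tfrac{\norm{g(\blambda)}_2^2}{\norm{g(\blambda)}_\infty^2}$ simplifies to $\tfrac{2a}{\delta^2}$ with $a=\frac{\norm{g(\blambda)}_2^2\norm{g(\blambda)}_\infty^2\norm{\Uk^*\b{\delta}_i}_2^4}{\norm{\T_i g}_2^4}$, which is precisely the stated constant. Two points need attention. Because the desired conclusion is one-sided (only the tail $\ge 1-\delta$ is needed), one revisits the matrix-concentration step underlying Theorem~\ref{theo:sampling_gL} and keeps a single tail, which replaces the dimensional prefactor $2k$ by $k$ and hence $\log(2k/\epsilon)$ by $\log(k/\epsilon)$. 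Finally, since $\norm{\T_i g}_2^2=\sum_\ell g(\lambda_\ell)^2|\mathbf{u}_\ell[i]|^2\le\norm{g(\blambda)}_\infty^2\sum_\ell|\mathbf{u}_\ell[i]|^2=\norm{g(\blambda)}_\infty^2\norm{\Uk^*\b{\delta}_i}_2^2$, we have $\delta'\le\delta$, so $(1+\delta'/3)\le(1+\delta/3)$ and the hypothesized bound $M\ge\frac{2a}{\delta^2}(1+\delta/3)\log(k/\epsilon)$ indeed dominates the count produced by the substitution.

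The hard part will be bookkeeping rather than a conceptual gap: tracking the normalization factors so that the rescaled tolerance $\delta'$ flows through the $(\delta')^{-2}$ and $(1+\delta'/3)$ terms and reassembles into the single clean constant $a$, and justifying that the one-sided specialization of the concentration argument legitimately downgrades $2k$ to $k$. The small inequality $\delta'\le\delta$, obtained from the identity $\T_i g=g(\L)\b{\delta}_i$ together with $\max_\ell g(\lambda_\ell)^2=\norm{g(\blambda)}_\infty^2$, is what makes the final replacement of $\delta'$ by $\delta$ valid and completes the argument.
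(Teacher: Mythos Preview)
Your proposal is correct and follows essentially the same route as the paper: specialize the concentration bound underlying Theorem~\ref{theo:sampling_gL} to $\x=\b{\delta}_i$ (using $\T_i g=g(\L)\b{\delta}_i$), keep only the one-sided tail so that $2k$ becomes $k$, perform the change of variable between $\delta$ and $\delta'$ to turn the additive error into a relative one, and use $\norm{\T_i g}_2^2\le\norm{g(\blambda)}_\infty^2\norm{\Uk^*\b{\delta}_i}_2^2$ to absorb $(1+\delta'/3)$ into $(1+\delta/3)$. The only cosmetic difference is that the paper invokes the one-sided inequality~\eqref{eq:one_side_tig_bound} directly from inside the proof of Theorem~\ref{theo:sampling_gL}, rather than first stating the two-sided conclusion and then revisiting the concentration step.
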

Theorem~\ref{theo:sampling_tig_exact} warrants that given enough samples $M$, Algorithm~\ref{algo:ce} captures with some probability $1-\epsilon$ (close to $1$), at least a good percentage of the energy at node $i$. The factor $a$ is always greater than $1$ and varies depending on the shape of the kernel $g$ and of the graph eigenvectors. However it is $\Landau(k)$ and exactly equal to $k$ if $g$ is a rectangular kernel. Indeed, a simple transformation shows that
\begin{equation*}
a = \frac{\norm{g(\blambda)}_{2}^{2}\norm{g(\blambda)}_{\infty}^{2}\norm{\Uk^{*}\b{\delta}_{i}}_{2}^{4}}{\norm{\T_{i}g}_{2}^{4}}
=
\frac{\sum_\ell g^2(\lel)}{\max_{\ell}\left|g^{2}(\lambda_{\ell})\right|}\left( \frac{\max_{\ell}\left|g^{2}(\lambda_{\ell})\right|\sum_{\ell\in\mathcal{K}}\b{u}_{\ell}^{2}[i]}{\sum_{\ell}g^{2}(\lambda_{\ell})\b{u}_{\ell}^{2}[i]} \right)^2. \end{equation*}
The first term is smaller than $k$ but is usually close to $k$ for a kernel close to a rectangle. The second term is greater than $1$ but close to $1$ given that the kernel is close to a rectangle. 

Problematically, this bound becomes loose if the kernel $g$ has a large rank because of the term $\sum_{\ell\in\mathcal{K}}\b{u}_{\ell}^{2}[i]$. To cope with this problem we can use another kernel $g^\prime$ that is a low-rank approximation of $g$.
\begin{theorem} \label{theo:sampling_tig_approx}
Given a graph $\G$, let $g^{\prime}$ (with $\norm{g^\prime(\blambda)}_{0}=k$) to be the rank $k$ approximation of the kernel $g$,
i.e.,
\[
g(\lel)=\begin{cases}
g^{\prime}(\lel) & \text{for the the \ensuremath{k} greatest values of \ensuremath{|g(\lel)|}}\\
0 & \text{otherwise.}
\end{cases}
\]
Using the sampling scheme described in Section~\ref{sec:adapted_sampling_scheme} with the kernel $g$, for $\delta>0$, each node $i$ is assured with a probability $1-\epsilon$ to have
\[
\frac{\frac{1}{M}\norm{\M\P^{\frac{1}{2}}\T_{i}g}_{2}^{2}}{\norm{\T_{i}g}_{2}^{2}}\geq1-\delta-\frac{\norm{\T_{i}\left(|g^{\prime}|-|g|\right)}_{2}^{2}}{\norm{\T_{i}g}_{2}^{2}}
\]
providing the number of samples satisfies\footnote{Note that $\norm{\T_{i}g}_{2}^{2}\geq\norm{\T_{i}g^\prime}_{2}^{2}$.}
\[
M\geq2\frac{1}{\delta^{2}}\frac{\norm{g^\prime(\blambda)}_{2}^{2}\norm{g^\prime(\blambda)}_{\infty}^{2}\norm{\Uk^{*}\b{\delta}_{i}}_{2}^{4}}{\norm{\T_{i}g}_{2}^{4}}\left(1+\frac{\delta}{3}\right)\log\left(\frac{k}{\epsilon}\right).
\]
\end{theorem}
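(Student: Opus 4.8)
The plan is to reduce the general (possibly full-rank) kernel $g$ to its rank-$k$ approximation $g^{\prime}$, apply the per-node concentration of Theorem~\ref{theo:sampling_tig_exact} to $g^{\prime}$, and then transfer the guarantee back to $g$ at the cost of the spectral tail that $g^{\prime}$ discards. The starting observation is that, as in the previous theorems, for any fixed signal $\b{v}$ one has $\frac{1}{M}\norm{\M\P^{-\frac12}\b{v}}_2^2=\b{v}^{*}\b{A}\b{v}$ with $\b{A}:=\frac1M\P^{-\frac12}\M^{*}\M\P^{-\frac12}$ an unbiased estimator of the identity ($\Esp{\b{A}}=\b{I}$, from $\Esp{\frac1M\M^{*}\M}=\P$). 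Applying Theorem~\ref{theo:sampling_tig_exact} directly to $g$ is useless because its full rank makes $\norm{\Uk^{*}\b{\delta}_{i}}_2^2$ and hence the sample bound explode; this is exactly why the low-rank surrogate $g^{\prime}$ is introduced.

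First I would record the spectral decomposition. Since $g^{\prime}$ coincides with $g$ on the index set $\mathcal{K}$ of the $k$ largest $|g(\lel)|$ and vanishes elsewhere, and since $\T_i g=g(\L)\b{\delta}_i=\sum_\ell g(\lel)\b{u}_\ell[i]\,\b{u}_\ell$, the atom splits as $\T_i g=\T_i g^{\prime}+\T_i(g-g^{\prime})$, where $\T_i g^{\prime}=\Uk\Uk^{*}\T_i g$ lies in $\spn\Uk$ and $\T_i(g-g^{\prime})=(\b{I}-\Uk\Uk^{*})\T_i g$ lies in its orthogonal complement. These two pieces are orthogonal, so $\norm{\T_i g}_2^2=\norm{\T_i g^{\prime}}_2^2+\norm{\T_i(g-g^{\prime})}_2^2$; moreover the atom norm depends only on the magnitude of the kernel, whence $\norm{\T_i(g-g^{\prime})}_2^2=\norm{\T_i(|g^{\prime}|-|g|)}_2^2$, the exact quantity appearing in the statement (and the footnote's $\norm{\T_i g}_2^2\geq\norm{\T_i g^{\prime}}_2^2$ is just its nonnegativity).

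Next I would apply the machinery behind Theorem~\ref{theo:sampling_tig_exact} to the in-subspace part. Because $g^{\prime}$ has rank $k$, $\T_i g^{\prime}=\Uk\b{c}$ with $\b{c}=g^{\prime}(\bLambda_{k})\Uk^{*}\b{\delta}_i$, and the estimator restricted to $\spn\Uk$ is governed by the $k\times k$ matrix $\Uk^{*}\b{A}\Uk$. The same range-and-variance (Bernstein/matrix-Chernoff) estimates as in the proof of Theorem~\ref{theo:sampling_tig_exact}, now applied to the rank-$k$ surrogate sampled with the adapted distribution of $g$, yield $\norm{\Uk^{*}\b{A}\Uk-\b{I}}_{op}\leq\delta$ with probability $1-\epsilon$ (the $\log(k/\epsilon)$ dimension factor coming from the $k\times k$ size, and the node-dependent prefactor being the stated ratio built from $\norm{g^{\prime}(\blambda)}_2^2$, $\norm{g^{\prime}(\blambda)}_\infty^2$ and $\norm{\Uk^{*}\b{\delta}_i}_2^4$). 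On this event $\frac1M\norm{\M\P^{-\frac12}\T_i g^{\prime}}_2^2=\b{c}^{*}\Uk^{*}\b{A}\Uk\,\b{c}\geq(1-\delta)\norm{\T_i g^{\prime}}_2^2$. Expanding the full energy as $\b{v}^{*}\b{A}\b{v}$ with $\b{v}=\T_i g^{\prime}+\T_i(g-g^{\prime})$ leaves, besides this controlled $\spn\Uk$ term, a cross term and a tail self-energy term; using $\norm{\T_i g^{\prime}}_2^2=\norm{\T_i g}_2^2-\norm{\T_i(g-g^{\prime})}_2^2$ and normalizing by $\norm{\T_i g}_2^2$ should then produce the stated bound.

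The hard part will be the cross term $2\,(\T_i g^{\prime})^{*}\b{A}\,\T_i(g-g^{\prime})$ together with the tail self-energy $(\T_i(g-g^{\prime}))^{*}\b{A}\,\T_i(g-g^{\prime})$ generated when the full atom is pushed through $\b{A}$. In expectation they are harmless: $\Esp{\b{A}}=\b{I}$ and $g^{\prime}$, $g-g^{\prime}$ have disjoint spectral supports, so $\scp{\T_i g^{\prime}}{\T_i(g-g^{\prime})}=0$. But they are random and are \emph{not} controlled by the rank-$k$ Chernoff event, which only certifies behaviour on $\spn\Uk$. The delicate point is therefore to absorb this cross/tail contribution into the single deterministic correction $\norm{\T_i(|g^{\prime}|-|g|)}_2^2/\norm{\T_i g}_2^2$ of the statement --- for instance through a Cauchy--Schwarz / AM--GM estimate in the $\b{A}$-seminorm against the tail energy --- rather than attempting to concentrate the high-rank tail directly, which would reintroduce precisely the $k$-dependence that passing to $g^{\prime}$ was meant to eliminate.
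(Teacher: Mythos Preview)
Your decomposition is correct and the idea of passing to the rank-$k$ surrogate $g'$ is the right one, but the route diverges from the paper precisely where you flag it as delicate, and the gap there is real. After writing $\frac{1}{M}\norm{\M\P^{-1/2}\T_ig}_2^2=(\T_ig')^*\b{A}(\T_ig')+2(\T_ig')^*\b{A}\,\T_i(g-g')+(\T_i(g-g'))^*\b{A}\,\T_i(g-g')$, the last two pieces are \emph{random} quantities supported outside $\spn\Uk$. Cauchy--Schwarz or AM--GM in the $\b{A}$-seminorm only trades them against one another (for instance $|2v'^*\b{A}w|\le\eta\,v'^*\b{A}v'+\eta^{-1}w^*\b{A}w$), and you are still left with $w^*\b{A}w$ for $w=\T_i(g-g')$, which bears no deterministic relation to $\norm{\T_i(|g'|-|g|)}_2^2=\norm{w}_2^2$ without a separate concentration step on the tail. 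That is exactly the high-rank dependence the theorem is designed to avoid, so this line cannot close as stated.

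The paper sidesteps the issue by never expanding. Its first move is the monotonicity
\[
\tfrac{1}{M}\norm{\M\P^{-\frac12}\T_i g}_2^2\;\ge\;\tfrac{1}{M}\norm{\M\P^{-\frac12}\T_i g'}_2^2,
\]
argued by writing $\T_ig=\b{T}_i\,g(\blambda)$ with $\b{T}_i$ linear and $g'(\blambda)$ a coordinate truncation of $g(\blambda)$. With the full sampled energy already lower-bounded by that of the rank-$k$ atom, the one-sided bound~\eqref{eq:one_side_tig_bound} applied to $g'$ gives $\ge\norm{\T_ig'}_2^2-\delta\norm{g'(\blambda)}_\infty^2\norm{\Uk^*\b{\delta}_i}_2^2$, and Lemma~\ref{lemma:norm_tig_triangular_inequality} (which in this truncated case is exactly your orthogonal split) replaces $\norm{\T_ig'}_2^2$ by $\norm{\T_ig}_2^2-\norm{\T_i(|g'|-|g|)}_2^2$; the change of variable and division by $\norm{\T_ig}_2^2$ finish. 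So the ingredient your plan is missing is this first inequality, which discards the cross and tail contributions outright rather than bounding them. (In fairness, the paper's justification of that step---``linearity of $\b{T}_i$''---is itself rather thin, since truncating coordinates of a vector need not decrease the norm of an arbitrary linear image; you may want to scrutinise that point when you study the proof.)
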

Using Theorem~\ref{theo:sampling_tig_approx}, the number of samples $M$ required can be highly reduced. Indeed, when the kernel $g$ is well concentrated but not low rank, we trade some approximation error encoded by $\frac{\norm{\T_{i}\left(|g^{\prime}|-|g|\right)}_{2}^{2}}{\norm{\T_{i}g}_{2}^{2}}$ (which will be low if $g$ is concentrated) but we will need a smaller number of samples due to the fact that $g^\prime$ is low rank. This theorem can be interesting for a heat kernel for example. 



\section{Metrics based on localized filters}
\label{sec:metrics}

Before moving on to the information diffusion from the samples, we need to take a closer look to localized filters and in particular see how they can be used to measure distances or correlations between nodes.  

\subsection{Localized Kernel Distance}

Since localized filters are proven to be concentrated in the vertex domain (see \cite[Theorem 1]{shuman2013vertex}), it seems natural to use them to get geodesic measures or correlations between nodes. To this end, we introduce the Localized Kernel Distance (LKD), which is defined as :

\begin{equation} \label{eq:lkd_definition}
\lkd(i, j) = 1 - \frac{\T_ig^2[j]}{\|\T_ig\| \| \T_jg\|}.
\end{equation}

Let us now examine its properties by stating the following theorem:

\begin{theorem} \label{theo:lkd_pseudosemimetric}
The space $(\V, \lkd)$ with $\V$ the vertex set of a graph and $\lkd$ as defined in \ref{eq:lkd_definition} is a pseudosemimetric space, that is, for every $x, y \in \V$:
\begin{enumerate}
\item $\lkd(x, y) \geq 0$
\item $\lkd(x, x) = 0  $
\item $\lkd(x, y) = \lkd(y, x)$
\end{enumerate} 

\end{theorem}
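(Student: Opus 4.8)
The plan is to show that the ratio appearing in $\lkd$ is precisely the cosine of the angle between the two atoms $\T_i g$ and $\T_j g$, so that $\lkd$ is nothing but the familiar cosine distance $1 - \cos\theta_{ij}$ on $\Rbb^N$; the three axioms then reduce to Cauchy--Schwarz and the symmetry of an inner product. The single computation that makes everything work is the identity $\T_i g^2[j] = \scp{\T_i g}{\T_j g}$, which I would establish first. Writing $g^2$ for the function $\lambda \mapsto g(\lambda)^2$, orthonormality of the eigenvectors gives $g^2(\La) = (g(\La))^2$, so by the definition of the localization operator
\begin{equation*}
\T_i g^2[j] = \left(g^2(\La)\right)_{ij} = \left((g(\La))^2\right)_{ij} = \sum_{n} (g(\La))_{in}\,(g(\La))_{nj}.
\end{equation*}
Since $\La$ is real and symmetric, $g(\La)$ is symmetric, hence $(g(\La))_{nj} = (g(\La))_{jn} = \T_j g[n]$ while $(g(\La))_{in} = \T_i g[n]$, and therefore
\begin{equation*}
\T_i g^2[j] = \sum_n \T_i g[n]\,\T_j g[n] = \scp{\T_i g}{\T_j g}.
\end{equation*}

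With this identity in hand the three claims are immediate. Non-negativity (property~1) is exactly the Cauchy--Schwarz inequality $\left|\scp{\T_i g}{\T_j g}\right| \le \norm{\T_i g}\,\norm{\T_j g}$, which forces the fraction to be at most $1$ and hence $\lkd(i,j) \ge 0$. For property~2, setting $j = i$ turns the numerator into $\scp{\T_i g}{\T_i g} = \norm{\T_i g}^2$, which cancels the denominator $\norm{\T_i g}^2$, giving $\lkd(i,i) = 1 - 1 = 0$. Symmetry (property~3) holds because both the inner product $\scp{\T_i g}{\T_j g}$ and the product of norms are invariant under exchanging $i$ and $j$.

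The only genuine content is the spectral identity of the first paragraph; once $\lkd$ is exposed as a normalized inner product, the rest is routine and no triangle inequality is claimed (indeed it need not hold, which is precisely why the space is merely a pseudosemimetric). The one caveat I would flag for well-definedness is that the normalization requires $\norm{\T_i g} \neq 0$ at every vertex, equivalently $\Uk^{*}\b{\delta}_i \neq 0$, which is the same nondegeneracy already implicit in Section~\ref{sec:randomsampling}; under it the cosine is defined and all three axioms hold verbatim.
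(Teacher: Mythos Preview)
Your proof is correct and follows essentially the same route as the paper: both establish the key identity $\T_i g^2[j] = \scp{\T_i g}{\T_j g}$ (you via $g^2(\La) = (g(\La))^2$ and symmetry of $g(\La)$, the paper via an explicit eigenbasis expansion) and then read off the three axioms from Cauchy--Schwarz and the symmetry of the inner product. Your derivation of the identity is arguably cleaner, but the structure and content of the argument are the same.
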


\begin{proof}
First, let us derive an alternative form of \eqref{eq:lkd_definition} :

\begin{equation}\label{lkd_definition_2}
\lkd(x, y) = 1 - \frac{\langle \T_xg , \T_yg \rangle}{\|\T_xg\| \| \T_yg\|}
\end{equation}

This can be derived as follows :

\begin{eqnarray*}
\lkd(x, y) & = & 1 - \frac{\T_xg^2[y]}{\|\T_xg\| \| \T_yg\|} \\
& = & 1 - \frac{\sum_\ell g(\b \lambda_\ell)^2 \b u_{\ell}^*[x] \b u_{\ell}[y]}{\|\T_xg\| \| \T_yg\|} \\
& = & 1 - \frac{\sum_\ell (g(\b \lambda_\ell) \b u_{\ell}^*[x]) (g(\b \lambda_\ell) \b u_{\ell}[y])}{\|\T_xg\| \| \T_yg\|} \\
& = & 1 - \frac{\sum_\ell (g(\b \lambda_\ell) \b u_{\ell}^*[x]) (g(\b \lambda_\ell) \b u_{\ell}^*[y])\sum_n \b u_{\ell}[n]^2}{\|\T_xg\| \| \T_yg\|} \\
& = & 1 - \frac{\sum_n \sum_\ell (g(\b \lambda_\ell) \b u_{\ell}^*[x]\b u_\ell[n]) (g(\b \lambda_\ell) \b u_{\ell}^*[y] \b u_{\ell}[n])}{\|\T_xg\| \| \T_yg\|} \\
& = & 1 - \frac{\langle \T_xg , \T_yg \rangle}{\|\T_xg\| \| \T_yg\|}
\end{eqnarray*}

Now let us verify the properties one by one :
\begin{enumerate}
\item We have using \eqref{lkd_definition_2} :
\begin{eqnarray*}
\lkd(x, y) & = &  1 - \frac{\langle \T_xg , \T_yg \rangle}{\|\T_xg\| \| \T_yg\|} \\
& \geq &  0
\end{eqnarray*}
where the last inequality stands because $\langle \T_xg , \T_yg \rangle \leq \|\T_xg\| \| \T_yg\|$ (Cauchy-Schwartz inequality). 

\item Let us verify that $x = y \Rightarrow \lkd(x, y) = 0$ :
\begin{eqnarray*}
\lkd(x, y) & = & \lkd(x, x) \\
& = & 1 - \frac{\T_xg^2[x]}{\|\T_xg\| \| \T_xg\|} \\
& = &  1 - \frac{\sum_\ell g(\b \lambda_\ell)^2 \b u_{\ell}^*[x] \b u_{\ell}[x]}{\|\T_xg\|^2} \\
& = &  1 - \frac{\sum_\ell (g(\b \lambda_\ell) \b u_{\ell}[x])^2 \sum_n \b u_{\ell}[n]^2}{\|\T_xg\|^2} \\
& = &  1 - \frac{\sum_n \sum_\ell (g(\b \lambda_\ell) \b u_{\ell}[x] \b u_{\ell}[n])^2}{\|\T_xg\|^2} \\
& = &  1 - \frac{\|\T_xg\|^2}{\|\T_xg\|^2} \\
& = & 0
\end{eqnarray*}

\item Finally, we have 
\begin{eqnarray*}
\lkd(x, y) & = & 1 - \frac{\T_xg^2[y]}{\|\T_xg\| \| \T_yg\|} \\
& = &  1 - \frac{\sum_\ell g(\b \lambda_\ell)^2 \b u_{\ell}^*[x] \b u_{\ell}[y]}{\|\T_xg\| \| \T_yg\|} \\
& = & 1 - \frac{\sum_\ell g(\b \lambda_\ell)^2 \b u_{\ell}^*[y] \b u_{\ell}[x]}{\|\T_xg\| \| \T_yg\|} \\
& = & 1 - \frac{\T_yg^2[x]}{\|\T_xg\| \| \T_yg\|} \\
& = & \lkd(y, x)
\end{eqnarray*}

\end{enumerate}

\end{proof}

\begin{theorem} \label{theo:lkd_semimetric}
The space $(\V, \lkd)$ with $\V$ the vertex set of a graph and $\lkd$ as defined in \ref{eq:lkd_definition}, with $g$ constant, is a semimetric space, that is, for every $x, y \in \V$:
\begin{enumerate}
\item $\lkd(x, y) \geq 0$
\item $\lkd(x, y) = 0 \Leftrightarrow x = y $
\item $\lkd(x, y) = \lkd(y, x)$
\end{enumerate} 

\end{theorem}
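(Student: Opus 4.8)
The plan is to reuse Theorem~\ref{theo:lkd_pseudosemimetric} as much as possible and to supply only the one ingredient that the constancy of $g$ adds. Nonnegativity (property 1) and symmetry (property 3) hold for \emph{any} kernel $g$ and were already established in Theorem~\ref{theo:lkd_pseudosemimetric}; likewise the implication $x=y \Rightarrow \lkd(x,y)=0$ is exactly property 2 of that theorem. Hence the only genuinely new statement to prove is the converse $\lkd(x,y)=0 \Rightarrow x=y$, and this is precisely where I would invoke that $g$ is constant.

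First I would compute $\T_i g$ explicitly for a constant kernel. Writing $g(\lambda_\ell)=c$ for all $\ell$ with $c\neq 0$ (the case $c=0$ must be excluded, since then every $\T_i g$ vanishes and the normalization in \eqref{eq:lkd_definition} is undefined), the localization formula gives
\begin{equation*}
\T_i g[n] = \sum_{\ell=0}^{N-1} g(\lambda_\ell)\, \b u_\ell^*[i]\, \b u_\ell[n] = c \sum_{\ell=0}^{N-1} \b u_\ell^*[i]\, \b u_\ell[n] = c\,(\U\U^*)_{in} = c\,\b{\delta}_i[n],
\end{equation*}
where the last equality uses that $\{\b u_\ell\}$ is a complete orthonormal basis, so that $\U\U^*=\mathbf{I}$. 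Thus a constant filter localizes to a scaled Kronecker delta, $\T_i g = c\,\b{\delta}_i$, which is supported on the single vertex $i$.

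With this in hand the result is immediate: substituting into the alternative form \eqref{lkd_definition_2} and using $\langle \b{\delta}_x,\b{\delta}_y\rangle=\delta_{xy}$ together with $\|\b{\delta}_i\|=1$ yields
\begin{equation*}
\lkd(x,y) = 1 - \frac{\langle c\,\b{\delta}_x, c\,\b{\delta}_y\rangle}{\|c\,\b{\delta}_x\|\,\|c\,\b{\delta}_y\|} = 1 - \frac{c^2\,\delta_{xy}}{c^2} = 1 - \delta_{xy}.
\end{equation*}
Therefore $\lkd(x,y)=0$ precisely when $x=y$ (and $\lkd(x,y)=1$ otherwise), which establishes property 2 in both directions and, combined with the properties inherited from Theorem~\ref{theo:lkd_pseudosemimetric}, completes the proof.

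I expect the main obstacle to be conceptual rather than computational: the whole argument hinges on recognizing that $\T_i g$ collapses to $c\,\b{\delta}_i$ when $g$ is constant, which is exactly what removes the degeneracy that prevented a full separation axiom in the general (pseudosemimetric) case. The one point requiring care is ruling out the trivial kernel $c=0$, for which the distance is not even well defined. As a sanity check, the resulting constant-kernel $\lkd$ is simply the discrete metric on $\V$; in particular this is consistent with the statement asserting only a semimetric, and it flags that any triangle-inequality strengthening toward a genuine metric would require a separate argument.
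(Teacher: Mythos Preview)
Your proof is correct and rests on the same key fact as the paper's argument, namely the row-orthonormality of $\U$ (equivalently $\U\U^*=\mathbf{I}$), but you organize the computation differently. The paper argues by contradiction: assuming $x\neq y$ and $\lkd(x,y)=0$, it evaluates the two sides of the Cauchy--Schwarz equality $\langle \T_xg,\T_yg\rangle=\|\T_xg\|\,\|\T_yg\|$ directly in the spectral domain, obtaining $c^2\sum_\ell \b u_\ell^*[x]\b u_\ell[y]=0$ on the left and $c^2$ on the right. You instead first identify $\T_ig=c\,\b\delta_i$ in the vertex domain and then substitute into \eqref{lkd_definition_2}, which yields the closed form $\lkd(x,y)=1-\delta_{xy}$ in one stroke. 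Your route is slightly more constructive (it exhibits the constant-kernel $\lkd$ as the discrete metric) and avoids the detour through contradiction; the paper's route stays entirely spectral and never names $\T_ig$ explicitly. Both are short and rely on exactly the same orthonormality identity, so the difference is one of presentation rather than substance. Your explicit exclusion of $c=0$ is a useful remark that the paper leaves implicit.
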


\begin{proof}

Properties 1 and 3, as well as the backward implication are still valid as stated in Theorem~\ref{theo:lkd_pseudosemimetric}.

Now let us check that $\lkd(x, y) = 0 \Rightarrow x = y$.

We want to do it by contradiction and thus search any $x,y$, $x \neq y$ for which $\lkd(x, y) = 0$, implying :

\begin{equation}
\langle \T_xg , \T_yg \rangle  =  \|\T_xg\| \| \T_yg\|
\end{equation}


We can rewrite this equality as :

\begin{equation}
\sum_\ell g(\b \lambda_\ell)^2 \b u_{\ell}^*[x] \b u_{\ell}[y] = \sqrt{\sum_\ell g(\b \lambda_\ell)^2 \b u_{\ell}^2[x]} \sqrt{\sum_\ell g(\b \lambda_\ell)^2 \b u_{\ell}^2[y]}
\end{equation}

For $g(x) = c$, with $c > 0$ a constant, the left hand side is :
\begin{equation}
\sum_\ell g(\b \lambda_\ell)^2 \b u_{\ell}^*[x] \b u_{\ell}[y] = c^2 \sum_\ell \b u_{\ell}^*[x] \b u_{\ell}[y] = 0
\end{equation}

The last equality comes from the fact that two lines of an orthonormal matrix are orthogonal, and $x \neq y$. 

Now the right-hand side is :

\begin{equation}
\sqrt{\sum_\ell g(\b \lambda_\ell)^2 \b u_{\ell}^2[x]} \sqrt{\sum_\ell g(\b \lambda_\ell)^2 \b u_{\ell}^2[y]} = c^2 \sum_\ell \b u_{\ell}^2[x] \sum_\ell \b u_{\ell}^2[y] = c^2
\end{equation}

with the last equality coming from the fact that $\U$ is an orthonormal basis.

Now, since $0 \neq c^2$ we have a contradiction, and thus the proof is completed. 


\end{proof}

\subsection{Kernelized Diffusion Distance}

Another approach to use localized atoms to define distances is to measure the norm of the difference between a filter localized at two different nodes. We call it the Kernelized Diffusion Distance and define it as:

\begin{equation} \label{eq:kdd_definition}
\kdd(i, j) = \| \T_ig - \T_jg \|, 
\end{equation}

where $g$ is a kernel defined in the graph spectral domain. Before going further, and as it will be useful later, let us derive a corollary definition of \ref{eq:kdd_definition} :

\begin{equation} \label{eq:kdd_definition_spectral}
\kdd(i, j) = \sqrt{\sum_{\ell} g(\b\lambda_{\ell})^2 (\b u^*_{\ell}[i] - \b u^*_{\ell}[j])^2 }. 
\end{equation}

This alternative definition can be quickly derived as follows :

\begin{eqnarray*}
\kdd(i, j)^2 &=& \| \T_ig - \T_jg \| \\
& = & \sum_n \left( \sum_{\ell} g(\b\lambda_{\ell}) \b u^*_{\ell}[i] \b u_{\ell}[n] - \sum_{\ell} g(\b\lambda_{\ell}) \b u^*_{\ell}[j]) \b u_{\ell}[n] \right)^2 \\
& = & \sum_n \left( \sum_{\ell} g(\b\lambda_{\ell}) (\b u^*_{\ell}[i] - \b u^*_{\ell}[j]) \b u_{\ell}[n] \right)^2 \\
& = & \sum_n \sum_{\ell} g(\b\lambda_{\ell})^2 (\b u^*_{\ell}[i] - \b u^*_{\ell}[j])^2 \b u^2_{\ell}[n] \\
& = & \sum_{\ell} g(\b\lambda_{\ell})^2 (\b u^*_{\ell}[i] - \b u^*_{\ell}[j])^2 \sum_n \b u^2_{\ell}[n] \\
& = & \sum_{\ell} g(\b\lambda_{\ell})^2 (\b u^*_{\ell}[i] - \b u^*_{\ell}[j])^2
\end{eqnarray*}

which implies \ref{eq:kdd_definition_spectral} by taking the square root on both sides. 

Let us now examine the properties of the KDD by stating the following theorem:

\begin{theorem} \label{theo:kdd_pseudometric}
The space $(\V, \kdd)$ with $\V$ the vertex set of a graph and $\kdd$ as defined in \ref{eq:kdd_definition} is a pseudometric space, that is, for every $x, y, z \in \V$:
\begin{enumerate}
\item $\kdd(x, y) \geq 0$
\item $\kdd(x, y) = \kdd(y, x)$
\item $\kdd(x, z) \leq \kdd(x, y) + \kdd(y, z)$
\end{enumerate} 

\end{theorem}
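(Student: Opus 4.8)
The plan is to recognize that $\kdd$ is nothing more than the distance induced by the vector norm $\norm{\cdot}$ between the localized atoms $\T_xg,\T_yg\in\Rbb^{N}$, read as points (graph signals) in the ambient Euclidean space. Once the definition $\kdd(x,y)=\norm{\T_xg-\T_yg}$ is viewed this way, the map $x\mapsto\T_xg$ pulls back the metric structure of $(\Rbb^{N},\norm{\cdot})$ onto $\V$, and all three pseudometric axioms are inherited directly from the axioms of the norm. No spectral computation beyond the definition itself is required; in particular the alternative form \eqref{eq:kdd_definition_spectral} is not needed for this theorem.

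Concretely, I would first dispatch the two easy properties. Non-negativity (property 1) is immediate since $\norm{\cdot}$ is a norm, so $\kdd(x,y)=\norm{\T_xg-\T_yg}\geq0$. For symmetry (property 2) I would invoke absolute homogeneity of the norm, $\kdd(x,y)=\norm{\T_xg-\T_yg}=\norm{-(\T_yg-\T_xg)}=\norm{\T_yg-\T_xg}=\kdd(y,x)$. The only substantive step is the triangle inequality (property 3): inserting the intermediate atom $\T_yg$ and applying the triangle inequality of the norm gives
\[
\kdd(x,z)=\norm{\T_xg-\T_zg}=\norm{(\T_xg-\T_yg)+(\T_yg-\T_zg)}\leq\norm{\T_xg-\T_yg}+\norm{\T_yg-\T_zg}=\kdd(x,y)+\kdd(y,z).
\]

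I expect no genuine obstacle here, since the whole statement is just the assertion that the pullback of a metric through a map is a pseudometric. It is worth flagging, though, why the result halts at \emph{pseudo}metric rather than metric: $\kdd(x,y)=0$ forces $\T_xg=\T_yg$ but not $x=y$, because distinct vertices may share the same localized atom (for instance when $g$ is band-limited, or in the presence of graph symmetries), so the identity of indiscernibles can fail. Only the weaker $\kdd(x,x)=\norm{\T_xg-\T_xg}=0$ holds unconditionally, which is exactly what a pseudometric guarantees.
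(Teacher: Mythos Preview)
Your proof is correct and follows essentially the same approach as the paper: recognize $\kdd$ as the pullback of the norm-induced metric via $x\mapsto\T_xg$ and read off the three axioms from the norm properties. The only cosmetic difference is in property~2, where the paper passes through the spectral form \eqref{eq:kdd_definition_spectral} to swap $x$ and $y$, while your use of absolute homogeneity $\norm{-v}=\norm{v}$ is more direct and avoids that detour entirely.
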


\begin{proof}
Let us verify the properties in order :

\begin{enumerate}
\item This property holds trivially due to the positivity of the norm $\| . \|$. 
\item We have 
\begin{eqnarray*}
\kdd(x, y) & = & \| \T_xg - \T_yg \| \\
& = &  \sqrt{\sum_{\ell} g(\b\lambda_{\ell})^2 (\b u^*_{\ell}[x] - \b u^*_{\ell}[y])^2 } \\
& = &  \sqrt{\sum_{\ell} g(\b\lambda_{\ell})^2 (\b u^*_{\ell}[y] - \b u^*_{\ell}[x])^2 } \\
& = & \| \T_yg - \T_xg \| \\
& = & \kdd(y, x)
\end{eqnarray*}

\item We have
\begin{eqnarray*}
\kdd(x, z) &=& \| \T_xg - \T_zg \| \\
&=& \| \T_xg - \T_yg + \T_yg - \T_zg \| \\
&\leq& \| \T_xg - \T_yg \| + \| \T_yg - \T_zg \| \\
&=& \kdd(x, y) + \kdd(y, z)
\end{eqnarray*}
which holds using the triangle inequality for vectors. 

\end{enumerate}

\end{proof}

Now that we proved that the KDD is a pseudo-metric, we only need to have the identity of the indiscernibles, i.e. $\kdd(i, j) = 0 \Leftrightarrow i = j$ to prove it is a metric. However, we can only do it using an additional hypothesis on $g$. This is formulated in the following theorem~:

\begin{theorem} \label{theo:kdd_metric}
The space $(\V, \kdd)$ with $\V$ the vertex set of a graph and $\kdd$ as defined in \ref{eq:kdd_definition}, with $g$ being full rank, is a metric space, that is, for every $x, y, z \in \V$:
\begin{enumerate}
\item $\kdd(x, y) \geq 0$
\item $\kdd(x, y) = \kdd(y, x)$
\item $\kdd(x, z) \leq \kdd(x, y) + \kdd(y, z)$
\item $\kdd(x, y) = 0 \Leftrightarrow x = y$
\end{enumerate} 

\end{theorem}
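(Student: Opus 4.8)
The plan is to reuse everything already established and isolate the single genuinely new obligation. Properties 1, 2, and 3 (non-negativity, symmetry, and the triangle inequality) are exactly the three axioms verified in Theorem~\ref{theo:kdd_pseudometric}, and that proof holds for \emph{any} kernel $g$, so I would simply invoke it verbatim. Likewise, the backward implication of property 4, namely $x=y \Rightarrow \kdd(x,y)=0$, is immediate from the spectral form \eqref{eq:kdd_definition_spectral}: setting $x=y$ makes every difference $\b u^*_\ell[x]-\b u^*_\ell[y]$ vanish, so the whole sum is zero. The only statement requiring the full-rank hypothesis is therefore the forward implication $\kdd(x,y)=0 \Rightarrow x=y$.

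For that implication I would start from the spectral expression
\[
\kdd(x,y)^2 = \sum_{\ell} g(\lel)^2\,\bigl(\b u^*_\ell[x]-\b u^*_\ell[y]\bigr)^2,
\]
and observe that it is a sum of non-negative terms. Hence $\kdd(x,y)=0$ forces each summand to vanish, i.e. $g(\lel)^2\,(\b u^*_\ell[x]-\b u^*_\ell[y])^2 = 0$ for every $\ell$. The full-rank assumption means precisely that $\norm{g(\blambda)}_{0}=N$, so $g(\lel)\neq 0$ for all $\ell$; dividing out the nonzero factor $g(\lel)^2$ leaves $\b u^*_\ell[x]=\b u^*_\ell[y]$ for every $\ell$. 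In other words, rows $x$ and $y$ of the Fourier matrix $\U$ coincide.

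The concluding step is to turn ``rows $x$ and $y$ of $\U$ are equal'' into ``$x=y$''. Since $\U$ is orthonormal it is invertible, and an invertible matrix cannot possess two identical rows (such a coincidence would render two rows linearly dependent and drop the rank below $N$). Therefore $x=y$, which closes the identity of indiscernibles and hence the proof. The only delicate point---and what I would regard as the crux rather than a true obstacle---is making explicit that ``full rank of $g$'' is exactly what guarantees $g(\lel)\neq 0$ for \emph{all} $\ell$. Without it, some coordinate $\ell$ could be annihilated by $g$, and two distinct vertices whose rows of $\U$ differ only in those annihilated coordinates would still yield $\kdd(x,y)=0$; this is precisely why the $\kdd$ is only a pseudometric in general and becomes a genuine metric under the full-rank hypothesis.
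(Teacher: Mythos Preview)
Your proposal is correct and follows essentially the same route as the paper: invoke Theorem~\ref{theo:kdd_pseudometric} for properties 1--3, check the backward implication of property~4 from the spectral form, and for the forward implication use that full rank gives $g(\lel)\neq 0$ for every $\ell$, forcing rows $x$ and $y$ of $\U$ to coincide, which the orthonormality of $\U$ rules out unless $x=y$. The only cosmetic difference is that the paper phrases the last step as a proof by contradiction, whereas you argue it directly.
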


\begin{proof}

Properties 1-3 are still valid as stated in Theorem~\ref{theo:kdd_pseudometric}.

Now let us check Property 4. 
\begin{itemize}

\item We first prove $x = y \Rightarrow \kdd(x, y) = 0$ :
\begin{eqnarray*}
d_g(x, y) & = & d_g(x, x) \\
& = & \| \T_xg - \T_xg \| \\
& = &  \sqrt{\sum_{\ell} g(\b\lambda_{\ell})^2 (\b u^*_{\ell}[x] - \b u^*_{\ell}[x])^2 } \\
& = & 0
\end{eqnarray*}

\item Now let us check that $\kdd(x, y) = 0 \Rightarrow x = y$.
We do it by contradiction and thus want to find any pair $x,y$, $x \neq y$ for which $\kdd(x, y) = 0$. 

In particular we need that :

\begin{equation} \label{eq:kdd_contradiction}
\kdd(x, y) = \sqrt{\sum_{\ell} g(\b\lambda_{\ell})^2 (\b u^*_{\ell}[x] - \b u^*_{\ell}[y])^2 } = 0
\end{equation}

with $x \neq y$. Since $g$ is full rank then $g(\b \lambda_{\ell}) > 0$, $\forall \ell$ and thus the only way for \eqref{eq:kdd_contradiction} to hold is if $\b u^*_{\ell}[x] = \b u^*_{\ell}[y]$, $\forall \ell$. In other words it would imply that the lines $x$ and $y$ of $\U$ are identical. Since $\U$ is a basis, it implies that all its lines are orthonormal, which means there exist no pair $x, y$ such as \eqref{eq:kdd_contradiction} hold, and thus the contradiction is established, which concludes the proof.  

\end{itemize}

\end{proof}

\paragraph{Diffusion distance}

As was hinted in the name, the distance defined in \eqref{eq:kdd_definition} happens to be a generalized diffusion distance. Indeed, taking its spectral formulation we have : 

\begin{equation}\label{eq:kdd_diffusion_distance}
d_g(i, j) = \sqrt{\sum_{\ell} g(\b\lambda_{\ell})^2 (\b u^*_{\ell}[i] - \b u^*_{\ell}[j])^2 } = D_t(i, j),
\end{equation}

where $D_t(i, j)$ is the diffusion distance associated to specific kernels depending on $t$ (i.e. the diffusion parameter). If we take two common definitions of the diffusion distance, the original works of \cite{nadler2005diffusion} and \cite{coifman2006diffusion} use a kernel of the form $g(x) = x^t$ and the Graph Diffusion Distance defined in \cite{hammond2013graph} uses the heat kernel $g(x) = e^{-tx}$. 

%
%
%


\section{Graph transductive learning}
\label{sec:transductive}

In this section we want to cast the problem of diffusing the information obtained on a few samples of the data (e.g. using sampling schemes such as defined in Section~\ref{sec:adapted_sampling_scheme}) in a transductive inference framework. In this setting, we are observing a label field or signal $\b x$ only at a subset of vertices $S \subset V$, i.e $\b y_i = \b x[i]$, $\forall i \in S$, with $\b y $ being the observed signal also called the label function. The goal of transductive learning is to predict the missing signal/labels using both the observed signal and the remaining data points. 

\subsection{Global graph diffusion}

Solutions of transductive inference using graphs can be solved in a number of ways, for example using Tikhonov regression~:

\begin{equation}\label{eq:tikhonov_regression}
\argmin_{\b x} \|\b y - \M \b x \|_2^2 + \mu \b x^t \L \b x,
\end{equation}

where $\M$ is the sampling operator and $\L$ the graph Laplacian. An alternative to the use of the Dirichlet smoothness constraint is to use graph Total Variation (TV). The regression would thus become :

\begin{equation}\label{eq:tv_regression}
\argmin_{\b x} \| \b y - \M \b x \|_2^2 + \mu \| \nabla_{\G} \b x \|_1
\end{equation}

with $ \nabla_{\G} \b x = \left( \sqrt{\W_{i,j}} (\b x[i] - \b x [j]) \right)$, $\forall (v_i, v_j) \in \E$. 

For large scale learning, solving the optimization problems as described above can be too expensive and one typically uses accelerated descent methods.

\subsection{RKHS transductive learning on graphs}

\subsubsection{Motivation}

Our first contribution is to replace the smoothness term arising in \ref{eq:tikhonov_regression} by constraining the solution to belong to the finite dimensional Reproducing Kernel Hilbert Space (RKHS) $\rkhs$ corresponding to the graph kernel $\bf G = g(\L)$, for some filter $g$. In this case, we instead solve the following problem~:

\begin{equation*}
\textrm{arg}\min_{\b x \in \rkhs} \|\b y - \M \b x \|_2^2
\end{equation*}

and show that the solution is given by a simple low-pass filtering step applied to the labelled examples. 

%

\subsubsection{Transductive learning and graph filters}
In this section, we formulate transductive learning as a finite dimensional regression problem. This problem is solved by constructing a reproducing kernel Hilbert space from a graph filter, which controls the smoothness of the solution and provides a fast algorithm to compute it. 

\paragraph{An empirical reproducing kernel Hilbert space}
Let $g$ be a smooth, strictly positive function defining a graph filter as defined in Section~\ref{sec:background}. The graph filter defines the following matrix~:
\begin{equation*}
\mathbf{G}[i,j] = g(\L)[i,j] = \T_ig [j],
\end{equation*}

where $\T_i$ is the localisation operator at vertex $i$. Since the filter is strictly positive definite,  $\mathbf{G}$ is positive definite and can be written as the Gram matrix of a set of linearly independent vectors. To see this, we use the spectral representation~:
\begin{eqnarray*}
	\bf G & = & \U g(\Lambda) \U^*\\
	& = & \U  g(\Lambda)^{1/2} 
	\bigl( \U  g(\Lambda)^{1/2} \bigr)^*.
\end{eqnarray*}
Let $\b r_i$ be the $i$-th row of $\U  g(\Lambda)^{1/2}$, we immediately see that $\b r_i^T \b r_j = \bf G[i,j]$. More explicitly, these vectors are written in terms of the graph filter~:
\begin{equation*}
\b r_i[j] = \sum_\ell \sqrt{g(\b\lambda_\ell)} \b u_\ell[i] \b u_\ell[j]. 
\end{equation*}

These expressions suggest to define the Hilbert space $\rkhs$ as the closure of all linear combinations of localized graph filters $\T_i g$. This space is therefore composed of functions of the form~:
\begin{equation}
\label{eq:rkhs}
	\b x = \sum_{k \in \V} \alpha_k \T_kg.
\end{equation}
Note that any $\b x \in \rkhs$ has a well-defined graph Fourier transform~:

\begin{equation*}
\hat{\b x}(\ell) = g(\b \lambda_\ell) \sum_{k\in \V} \alpha_k \b u_\ell[k].
\end{equation*}

This allows to equip $\rkhs$ with following scalar product~:
\begin{equation*}
\langle \b x, \b y \rangle_{\rkhs}
= \sum_\ell \frac{1}{g(\b \lambda_\ell)} \hat{\b x}(\ell)^* \hat{\b y}(\ell)
\end{equation*}

and the vectors $r_i$ form an orthonormal basis of $\rkhs$:
\begin{eqnarray*}
	\langle \b r_i, \b r_j\rangle_{\rkhs} & = & 
	\sum_\ell \frac{1}{g(\b\lambda_\ell)} \sqrt{g(\b\lambda_\ell)} \b u_\ell[i]^* \sqrt{g(\b\lambda_\ell)} \b u_\ell[j] \\
	& = & \sum_\ell \b u_\ell[i]^* \b u_\ell[j] \\
	& = & \delta_{i,j}.
\end{eqnarray*}
Let us now see that $\rkhs$ is a reproducing kernel Hilbert space (rkhs). We  show that the scalar product with $\T_ig$ in $\rkhs$ is the evaluation functional at vertex $i$.  We first compute~:
\begin{eqnarray*}
	\langle \T_ig, \T_jg \rangle_{\rkhs} & = & \sum_\ell \frac{1}{g(\lambda_\ell)}  g(\lambda_\ell)^2 u_\ell[i]^* u_\ell[j] \\
	& = & \T_ig [j].
\end{eqnarray*}
By linearity of the scalar product and the definition of $\rkhs$ \eqref{eq:rkhs} we have~:
\begin{eqnarray*}
	\langle \T_ig, \b x \rangle_{\rkhs} & = & \sum_{k\in \V} \alpha_k \langle \T_ig, \T_kg\rangle_{\rkhs} \\
	& = & \sum_{k\in \V} \alpha_k \T_kg[i]\\
	& = & \b x[i].
\end{eqnarray*}
Finally, for any $\b x \in \rkhs$, $\b x = \sum_{k\in \V} \beta_k \T_kg$, we have the following explicit form of their norm~:
\begin{eqnarray*}
	\| \b x \|_{\rkhs}^2 & = & \langle \b x, \b x\rangle_{\rkhs} \\
	& = & \sum_\ell \frac{1}{g(\b\lambda_\ell)} g(\b\lambda_\ell)^2 \sum_{i,j \in \V} \beta_i \beta_j^* \b u_\ell[i] \b u_\ell[j]^* \\
	& = & \sum_{i,j \in \V} \beta_i \beta_j^*  \bigl( \sum_\ell g(\b\lambda_\ell) \b u_\ell[i] \b u_\ell[j]^* \bigr) \\
	& = & \sum_{i,j \in \V} \beta_i \bf G[i,j]\beta_j^* \\
	& = & \beta^T \bf G \beta .
\end{eqnarray*}

\paragraph{Transductive learning}
Now that we have established $\rkhs$ as a valid RKHS, we will seek to recover the full signal by solving the following problem~:

\begin{equation}
	\label{eq:transductive}
	\tilde{\b x} = \textrm{arg}\min_{\b x \in {\rkhs}} \sum_{k \in S} L(\b y_k,\b x[k]) + \mu \|\b x \|_{\rkhs}.
\end{equation}

Let us first decompose $\rkhs = \mathcal{H}_S \oplus \mathcal{H}_S^\perp$, where 

\begin{equation*}
\mathcal{H}_S = \left\{
\b x \in \rkhs \textrm{ s.t. }  \b x = \sum_{k \in S} \alpha_k \T_kg 
\right\}.
\end{equation*}

Let us note that, for any $\b x \in \mathcal{H}_S$, 
\begin{eqnarray*}
	\| \b x \|_{\rkhs}^2 & = & 
	\sum_\ell g(\b\lambda_\ell)
	\sum_{i,j \in S} \alpha_i \alpha_j^*
	\b u_\ell[i] \b u_\ell[j]^* \\
	& = &
	\sum_{i,j \in S} \alpha_i \alpha_j^*
	\sum_\ell g(\b\lambda_\ell) \b u_\ell[i] \b u_\ell[j]^* \\
	& = & \alpha^T \bf K \alpha
\end{eqnarray*}
where $\bf K[i,j] = \bf G[i,j]$, $i,j \in S$, is positive definite since it is a principal submatrix of a positive definite matrix.

Let $\b x \in \rkhs$ be decomposed as $\b x = \b x_S + \b x_{S^\perp}$, where $\b x_S$ (resp. $\b x_{S^\perp}$) is the orthogonal projection of $\b x$ on $\mathcal{H}_S$ (resp. $\mathcal{H}_S^\perp$). Now it is immediate to check that~:
\begin{eqnarray*}
	\langle \T_kg, \b x_{S^\perp} \rangle_{\rkhs} & =  & \b x_{S^\perp}[k] \\
	& = & 0, \,  \forall k \in S.
\end{eqnarray*} 
Inserting this relationship back into \eqref{eq:transductive}, we see that~:
\begin{equation*}
\sum_{k \in S} L(\b y_k, \b x_S[k]+ \b x_{S^\perp}[k]) 
+ \b\lambda \|\b x_S + \b x_{S^\perp}\|^2_{\rkhs}
\geq
\sum_{k \in S} L(\b y_k, \b x_S[k]) 
+ \lambda \| \b x_S\|^2_{\rkhs},
\end{equation*}
since $\b x_{S^\perp}[k]=0$ $\forall k \in S$ and adding $\b x_{S^\perp}$ can only increase the norm of $\b x_S$ in $\rkhs$. This shows that the minimizer of \eqref{eq:transductive} is in $\mathcal{H}_S$ and therefore of the form
\begin{equation*}
\tilde{\b x} = \sum_{k \in S} \beta_k \T_kg
\end{equation*}
for some coefficients  $\beta_k$. Moreover since $\|\tilde{\b x}\|_{\rkhs} = \beta^T \mathbf{K} \beta$, we can rewrite \eqref{eq:transductive} as a minimization only on those coefficients with $\tilde{\b x} = \mathbf{K} \tilde{\beta}$ and
\begin{equation}
	\label{eq:transductive_coeffs}
	\tilde{\beta} = \textrm{arg}\min_{\beta}
	\sum_k L(\b y_k, (\mathbf{K} \beta)[k]) +
	\mu \beta^T \mathbf{K} \beta.
\end{equation}
Finally, we observe that the recovered signal can be computed by filtering a stream of Kronecker deltas located at the observed values and weighted by the optimal coefficients computed in \eqref{eq:transductive_coeffs}~:
\begin{equation}
	\label{eq:regression}
	\tilde{\b x} = g( \L ) \left\{
	\sum_{k\in S} \tilde{\beta}_k  \delta_k
	\right\}.
\end{equation}
To summarize, in the case of the squared loss function $L(a,b) = (a-b)^2$, the transductive solution is given by the following two steps algorithm~:
\begin{enumerate}
	\item Compute the optimal coefficients $\tilde{\beta} = (\mathbf{K} + \lambda \mathbb{I})^{-1} \b y$
	\item Compute the regression $\tilde{\b x} = g(\L) \left\{
\sum_{k\in S} \tilde{\beta}_k  \delta_k
\right\}$.
\end{enumerate}
Note that in traditional ridge regression, the last step is usually given in terms of an explicit kernel that is easy to evaluate. In our case, this expression is also available from ~\eqref{eq:regression}:
\begin{eqnarray*}
\tilde{\b x}[i] &=& \sum_{k\in S}\tilde{\beta}_k \bf G[i,k]\\ 
&=& \sum_{k\in S}\tilde{\beta}_k \bf G[k,i] \\
&=& \sum_{k\in S}\tilde{\beta}_k \T_kg[i]
\end{eqnarray*}
and, while the kernel does not have a simple analytical form, the sum can be efficiently computed via a graph filtering algorithm. In particular, it is sufficient to perform $|S|$ filterings to get $\T_kg, \forall k \in S$. 

\subsection{Convex hull diffusion}

If we want to cast the general problem of transductive learning in a simpler framework, we can restrict ourselves to linear solutions of the form $\tilde{\b x} = \b A \b y$. This means finding the coefficients such as :

\begin{equation}\label{eq:linear_diffusion}
\tilde{\b x}[i] = \sum_{k\in S} \alpha_{i,k} \b y_k, 
\end{equation}

with $\alpha_{i,k} = \b A[i, k]$.

In the previous section, we just saw how a RKHS built on a graph filter $g$ allowed to weight the contributions of localized filters centered on a subset $S$ of vertices. Writing the answer as a linear solution such as defined in \eqref{eq:linear_diffusion} would give the following coefficients :
\begin{equation}
\alpha_{i,k} = \frac{\tilde{\beta}_k \T_kg[i]}{\b y_k}. 
\end{equation}
Of course, this is kind of a degenerate solution since the coefficients are normalized by $\b y_k$ and the optimal coefficients already contain the information from $\b y$. 

\subsubsection{Convex Hull Diffusion}

In this section we propose to use a notion of distances to the samples $\b y$ to set the coefficients, more formally $\alpha_{i,k} \simeq d(x_i, y_k)$ for some distance function $d$. Here, quite naturally, we propose to make use of the LKD as defined in Section~\ref{sec:metrics}. Since the coefficients $\alpha_{i,k}$ need to encode similarity between $i$ and $k$, a reasonable choice is to set :

\begin{equation}
\alpha_{i,k} = 1 - \lkd(i, k) = \frac{\T_ig^2[j]}{\|\T_ig\| \| \T_jg\|}. 
\end{equation}
 
Using this definition, we know that the coefficients $\alpha_{i,k}$ have good properties derived from Theorem~\ref{theo:lkd_pseudosemimetric}.  First, since the LKD has values in $[0, 1]$, the coefficients will also have values in this range. Second, $\alpha_{i,k} = \alpha_{k,i}$ which means that $\b A$ is symmetric, square and non-negative. Finally, for any kernel $g$ we have $\alpha_{i,i} = 1$ and, if we restrict ourselves to kernels as defined in Theorem~\ref{theo:lkd_semimetric}, we have $\alpha_{i,j} = 0 \Leftrightarrow i = j$. In general, we have the good property that the coefficients $\alpha_{i,k}$ will be small if the vertices $i$ and $k$ are far apart on the graph and big if they are close. 

Now, knowing that a classical problem related to embedding data in low dimension, and more specifically to data visualization is a concentration around zero, we wish to devise a method to prevent it. It is reasonable to suppose that the problem of concentration is often related to a lack of information about some points or an absence of normalization. For example, if we take the linear combination as defined in \eqref{eq:linear_diffusion}, this could happen if for some $i$, all the coefficients $\alpha_{i,k}$ are small. 

In order to avoid this problem, we propose to use a normalized version $\mathbf{\tilde{A}}$ of $\b A$ that maps the points $\b x$ in the convex hull of $\b y$. This is done simply by normalizing each line of $\b A$, that is :

\begin{equation}
\tilde{\alpha}_{i,k} = \frac{\alpha_{i,k}}{\sum_{k \in S} \alpha_{i,k}}
\end{equation}

with $\tilde{\alpha}_{i,k} = \mathbf{\tilde{A}}[i, k]$.

\section{Compressive Embedding}

Building on what has been presented in the previous sections, we now propose our main contribution, a compressive embedding algorithm. 


Algorithm~\ref{algo:ce} is the main algorithm of our proposed scheme. In the following, $D$ denotes the original $N \times K$ data matrix, $S$ the high-dimensional sketch, which is an $M \times K$ subset of $D$, $\mathcal{A}_e$ is any embedding algorithm, $E_S$ the low-dimensional sketch and $E_D$ an embedding of the full data $D$ being of dimension $M \times d$ and $N \times d$ respectively. $\mathcal{D}_{\G}$ is the diffusion operator on the graph. We have $M < N$, $d < K$ and typically $d = 2$ or $d = 3$ when targeting visualization tasks. 

\begin{algorithm}
	\caption{Compressive Embedding}
	\label{algo:ce}
	\begin{algorithmic}[1]
		\State Compute a knn graph $\G$ from the data $D$
		\State Sample $M$ nodes of $\G$ cf. Section~\ref{sec:adapted_sampling_scheme}
		\State Create a sketch $S$ from $D$ using the sampled nodes
		\State Apply $\mathcal{A}_e$ to $S$ to obtain an embedding $E_S = \mathcal{A}_e(S)$
		\State Solve the transductive learning problem to get $\mathcal{D}_{\G}$ c.f. Section~\ref{sec:transductive}
		\State Apply the diffusion operator to obtain the final embedding $E_D = \mathcal{D}_{\G}(E_S) $
	\end{algorithmic}
\end{algorithm}

Let us detail Algorithm~\ref{algo:ce} step by step. 
\begin{enumerate}
\item The graph construction can be carried out very efficiently by performing ANN searches in the data. Various methods and optimized libraries are available for this task such as FLANN  \cite{flann_pami_2014}\footnote{\url{http://www.cs.ubc.ca/research/flann/}} or ANNOY\footnote{\url{https://github.com/spotify/annoy}}. From our experiments, the graph construction process in not the main computationally intensive task. 
\item Guided by the theoretical analysis of Section~\ref{sec:adapted_sampling_scheme} we use low-pass concentrated kernels. Two choices are interesting, either a low-rank approximation (such as defined in Theorem~\ref{theo:sampling_tig_approx}) of a heat kernel $g(x) = e^{-\tau x}$ or an exponential window such as $g(x) = s\left(\frac{1-x}{b_{\text{max}} }\right)$ with :

\begin{equation*}
s(x)=\begin{cases} 0 & \mbox{if }x<-1 \\ \frac{e^{-\frac{a}{x}}}{e^{-\frac{a}{x}}+e^{-\frac{a}{1-x}}} & \mbox{if }x\in[-1,1]\\ 1 & \mbox{if }x>1 \end{cases}
\end{equation*}

where $b_{\text{max}}$ is the desired cut-off frequency. 

In Section~\ref{sec:adapted_sampling_scheme} we defined theoretically the number of samples needed to be able to sense and diffuse information from the sampled nodes to every other node. In practice, we were able to verify that $M = \Landau(\log(N))$, is sufficient for the diffusion process. When the number of classes $|\C|$ is available, $M = \Landau(|\C| \log(N))$ is a good choice. Otherwise $M = \Landau(d(\G)\log(N))$ is a valid alternative, with $d(\G)$ the diameter of the graph. All those choices for $M$ are above the bounds defined in Section~\ref{sec:adapted_sampling_scheme} for any choice of concentration of the kernels since $k < N$. 

\item Since there is a trivial mapping between node indices and data points, creating the high-dimensional sketch $S$ is simply taking the subset of $D$ corresponding to the samples indices. 

\item The compressive embedding framework does not impose any constraint on the type of algorithm used. Indeed, any embedding algorithm $\mathcal{A}_e$ that can be applied on $D$, can be applied on $S \subset D$. We note the application of the embedding algorithm $E_S = \mathcal{A}_e(S)$. 

\item The proposed transductive learning methods used for the diffusion need only graph filtering operations which are all carried out using Chebyshev polynomial approximations. The two operators that need to be computed are the localized filters $\T_ig$ and $\| \T_ig \|$. The former can be computed by filtering Kronecker delta centered on $i$, which means that exactly one filtering is needed to compute one $\T_ig$. The 2-norm $\| \T_ig \|$ being needed for all $i$, one cannot compute it trivially by computing $N$ atoms since it would require $N$ filterings. So instead of computing the exact solution, we can approximate it using random filtering, i.e. $\| \T_ig \|^2$ is well estimated by $\Esp{\|g(\L) \b R \delta_i \|^2}$ with $\b R$ an $N \times P$ random matrix. This estimator can be computed by performing only $P$ filterings. 

\item The final diffusion is a simple matrix-vector multiplication for both RKHS and CHD methods. 
\end{enumerate}


\section{Embedding quality measures}
\label{sec:embedquality}

In the context of embedding algorithms for visualization two approaches are often used to assess their quality. The first one is a purely qualitative assessment by visual examination, which generally implies to have access to labeled data (see e.g. \cite{maaten2008visualizing} \cite{tang2016visualizing}). When labels are not available, a common practice is to generate the labels using a clustering of the points in high dimension. Visual examination is especially used for relative quality assessment, i.e. one method versus others. 

A second method, which is not directly related to visualization, is to measure the quality of the embedding, i.e. if close high dimensional points stay close after embedding. Different numerical measures of local consistency have been proposed such as generalization error of 1-nearest neighbor classifiers \cite{van2009dimensionality}\cite{sanguinetti2008dimensionality}, trustworthiness and continuity \cite{venna2006visualizing}. These quantitative assessments do not take into account possible labels for the data. 

In order to have quantitative quality measures that take labels into account, we propose three methods that evaluate different characteristics of the embeddings. Note that, despite the face that we consider the problem settings for which the data points are associated to some categorical information, data points with no label or multiple labels can be easily accommodated. We will write the set of categorical labels (also called classes) as $\mathcal{C} = \{c_1, c_2, \ldots, c_k \}$. For each class $c_i$ we note $V_{c_i}$ the subset of vertices of $\G_e$ having the label $c_i$.  

The common point between all our proposed methods is that they are based on a similarity graph constructed between the points in the embedded domain, that we will call $\G_e$ to distinguish from $\G$. For simplicity, a simple kNN graph using the Euclidean distance on the embedded points is sufficient. The first method is inspired by Cheeger constants and measures the clusterability of $\G_e$. The second method uses diffusion distances to measure class homogenity and the third uses $\T_ig$ to estimate the amount of positional outliers. 

\subsection{Average Clusterability Index}

\paragraph{Graph cuts}

In order to use graph cuts, we start with a few definitions. A cut partitions a graph $\G$ in two complementary sets of vertices $S$ and $S^c$ with $V = S \cup S^c$ and $S \cap S^c = \emptyset$. The graph cut operator is then defined as 

\begin{equation}\label{eq:cut}
Cut(S, S^c) = \sum_{i \in S} \sum_{j \in S^c} \W_{ij}
\end{equation}

which represents the total weight of the edges between $S$ and $S^c$, or the weight of the edges trimmed by the cut. 

%
%

In order to define the balanced cuts we also need to use the volume operator which is defined as 

\begin{equation}
Vol(S) = \sum_{i \in S} d_i
\end{equation}

where $d_i$ is the degree of the vertex $v_i$. 

\paragraph{Balanced cuts}

The first interest of cuts in the context of clustering is that the minimization of \ref{eq:cut} happens to be a solution to the clustering problem \cite{wu1993optimal}. The minimal cut is however rarely used in practice as it tends to favor small sets of isolated vertices. This led to a shift in focus to balanced cuts, which are cuts normalized by the volume that balances the size of the clusters. Two of the most popular balanced cuts are the Cheeger cut \cite{cheeger1969lower} and the Normalized cut \cite{shi2000normalized}.  

The Cheeger cut is related to the Cheeger constant which is defined as :

\begin{equation}\label{eq:cheeger}
h(\G) = min_{S \nsubseteq V} \frac{Cut(S, S^c)}{\min(Vol(S), Vol(S^c))}
\end{equation}

for a graph $\G$. This number is a measure of the clusterability of $\G$, i.e. it is small if there is a strong bottleneck and large otherwise.  

\paragraph{Class clusterability}

The Cheeger cut and cheeger constant imply a minimization in order to find the best clusters, but in our case, we already have the clusters as they are derived from the labels. We can thus reformulate Eq.~\ref{eq:cheeger} to define a Cheeger score for a class $c_i$ as :

\begin{equation}
h(\G, c_i) = \frac{Cut(\V_{c_i}, \V_{c_i}^c)}{\min(Vol(\V_{c_i}), Vol(\V_{c_i}^c))}
\end{equation}

where $\V_{c_i} \subset \V$ is the subset of vertices whose label is $c_i$ and $\V_{c_i}^c \subset \V$ the complementary set containing all the other vertices. We note the number of vertices of a label $c_i$ as $N_{c_i} = | \V_{c_i} |$. Computing the above quantity for a given class give a measure of its clusterability from which we can define the Average Clusterability Index (ACI) as an average weighted by the classes cardinality :

\begin{equation}
\aci = \frac{1}{N} \sum_{c_i \in \C } N_{c_i} h(\G, c_i) =  \frac{1}{N} \sum_{c_i \in \C } N_{c_i} \frac{Cut(\V_{c_i}, \V_{c_i}^c)}{\min(Vol(\V_{c_i}), Vol(\V_{c_i}^c))} .
\end{equation}

This score, as it is inspired by the Cheeger constant, has similar properties : small values mean that the classes are well separated in the graph and large values mean that the classes are much more mixed. 

%
%

\subsection{Average Cluster Concentration}
\label{sec:acc}
The ACI introduced in the previous section serves to evaluate how clustrable are the different classes. However, this metric will not help discriminate between good clusterability with or without splitted classes. Take for example a dataset with ten classes (such as images of digits). Applying an embedding algorithm could result in having ten classes (the perfect case) or more, meaning that at least one class is splitted in more than one cluster. The ACI between the two cases should be almost indistinguishable, as both embedding scenarii will result in higly clusterable classes. 

In order to measure this effect, we need to measure the overall concentration of all points in a class, i.e. that all points in a class are reasonably close to each other. To this end, we introduce a new measure called Average Cluster Concentration which leverages the Kernelized Diffusion Distance introduced above. The principle is that the average distance of all pairs of points of a given class should be small if a class is well concentrated and larger if a class is splitted around different cluster centers. 

More formally, using the KDD as defined in \ref{eq:kdd_definition} and written $\kdd$, we define the ACC for one class $c_i \in \C$ as :

\begin{equation}
\acc(c_i) = \frac{1}{N_{c_i}^2} \sum_{v_i \in \V_{c_i}} \sum_{v_j \in \V_{c_i}} \kdd(v_i, v_j) .
\end{equation}

As was done above for the ACI, it is natural to give a final score by a weighted average over the classes :

\begin{equation}
\acc = \frac{1}{N} \sum_{c_i \in \C} N_{c_i}  \acc(c_i) = \frac{1}{N} \sum_{c_i \in \C} \frac{1}{N_{c_i}} \sum_{v_i \in \V_{c_i}} \sum_{v_j \in \V_{c_i}} \kdd(v_i, v_j).
\end{equation}

This direct computation of the $\acc$ is straightforward but requires $\Landau(N_{c_i}^2)$ distance evaluations per class. Using the original definition of the KDD, it means making at least $\Landau(N_{c_i})$ filterings, raising the complexity to $\Landau(N_{c_i} m |\E|) $ per class assuming order $m$ polynomial approximations for the filtering. Since this is too costly for large graphs, we propose to use a randomized version. 

%


An approach to accelerate the computation of the ACC is to estimate it by randomly picking pairs of points in the class. In order to be robust to different class sizes, we should take a number of samples proportional to $N_{c_i}$. If we assume that to evaluate $n_{c_i}$ pairs, a reasonable choice is to take $ n_{c_i} = \Landau(N_{c_i})$ which requires a linear number of distance evaluations instead of a quadratic number for the exact ACC computation.  

%
%



\section{Experiments} \label{sec:experiments}
In this section, we provide experiments whose objective is to show how our proposed methods behave in practice. The first experiments examine how the quantitative measures proposed in Section~\ref{sec:embedquality} perform on specially designed synthetic datasets. The second section of experiments allows to visualize the results of the compressive embedding routine using different diffusion operators and compared to state-of-the-art methods. 

The experiments were performed with the GSPBox~\cite{perraudin2014gspbox}, an open-source software. As we stand for reproducible research principles, our implementations and the code to reproduce all our results is open and freely available\footnote{Will be available online shortly. For now, please contact the corresponding author. }.
Since our methods use random signals, it is expected that the results shall be slightly different in the details, but overall consistent.


\subsection{Embedding quality measures}

In order to assess the validity of the quantitative measures proposed in Section~\ref{sec:embedquality} we use controlled synthetic datasets which exhibit the patterns we would like to measure. Since we want to evaluate embeddings the datasets are two-dimensional point clouds with labels. All are dynamic and can be deformed continuously between two conformations by varying a parameter $\lambda \in [0 ,1]$. Figure~\ref{fig:synthetic_data} displays all datasets for different values of $\lambda$. 

As can be seen, a unique design principle was used with different topological arrangements. The idea is that for $\lambda = 0$ the different classes are well separated in clusters, with a greater number of clusters than the number of classes. For $\lambda = 1$ the classes are well separated with each class corresponding exactly to one cluster. For intermediate values, the classes are mostly mixed as the points move between the $\lambda = 0$ and $\lambda = 1$ conformations. The checkerboard pattern has an intermediate non-mixed conformation at $\lambda = 0.5$.

Due to the randomness of the data generation process and the evaluation method of the $\acc$, all results are averages over multiple realisations. 

\begin{figure}
\includegraphics[width=\columnwidth]{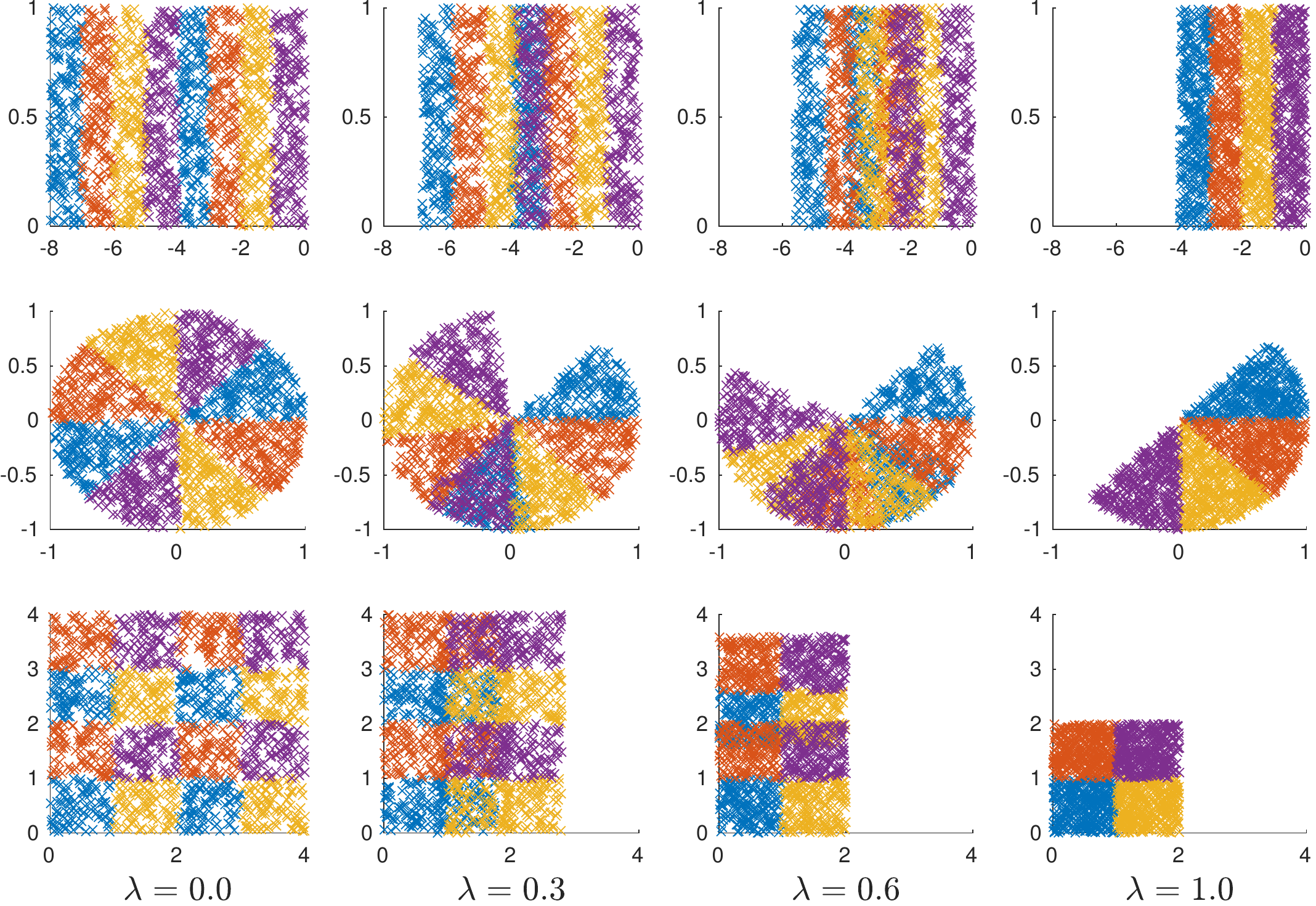}
\caption{Synthetic datasets with four classes, displayed at dynamics $\lambda = 0, 0.3, 0.6, 1$ (one value per column). On the top row, clusters form bands and move horizontally, on the medium row clusters form disc parts and rotate to form a half-disc and finally on the bottom row clusters are small squares in a larger one, move horizontally until $\lambda = 0.5$ and then vertically. }
\label{fig:synthetic_data}
\end{figure}

\subsubsection{ACI}

In this section, we expect to verify that the $\aci$ detects when classes are well clusterized. The results of the $\aci$ scores computed for the three synthetic datasets, using the full dynamic $\lambda \in [0, 1]$ and for different number of classes, is shown in Figure~\ref{fig:aci}. 

As expected, both extreme dynamics ($\lambda = 0$ and $\lambda = 1$ for bands and circle, and additionally $\lambda = 0.5$ for checkerboard) display low $\aci$ scores and the intermediate values correspond to the amount of mixing between the classes. In addition, more classes mean a steeper increase of $\aci$ the the classes mix. As a last remark, we can confirm that the $\aci$ is not sufficient to distinguish between splitted clusters and unified clusters ($\lambda = 0$ and $\lambda = 1$ respectively) which was the main reason for proposing the $\acc$.

\begin{figure}
\includegraphics[width=\columnwidth]{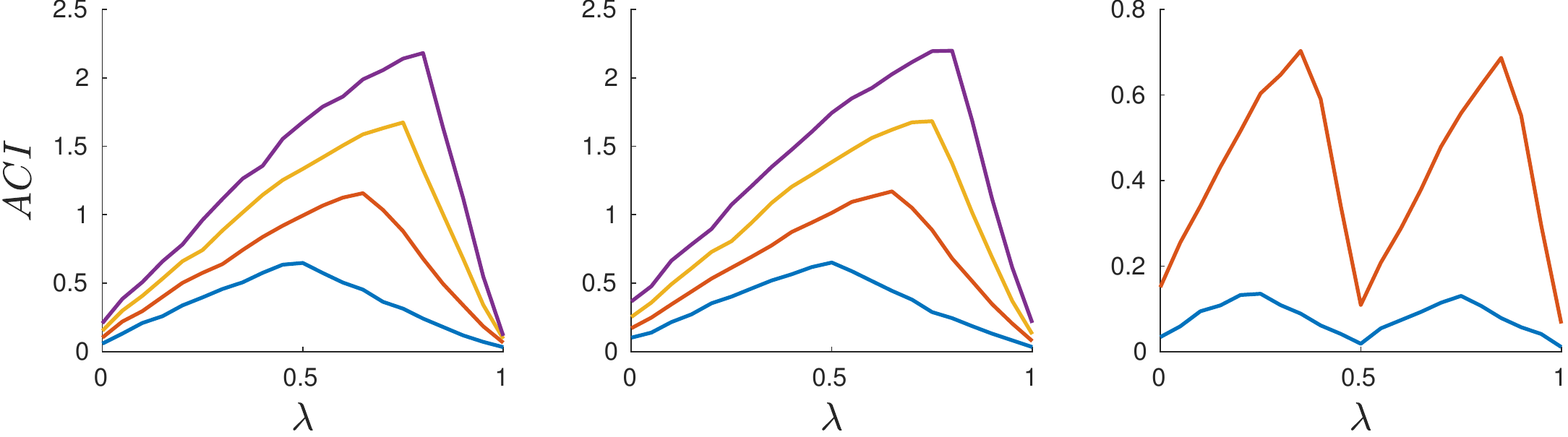}
\caption{ACI results on synthetic data for the bands (left), circle (middle) and checkerboard (right). The colors indicate the number of classes, blue $=2$, orange $=3$, yellow $=4$ and purple $=5$ for left and middle sub-figures, and blue $=4$ and orange $=16$ on the right. }
\label{fig:aci}
\end{figure}

\subsubsection{ACC}

In this experiment, we want to see if the $\acc$ is able to capture the notion of splitted clusters. Here, the $\acc$ was computed using the randomized method presented in Section~\ref{sec:acc}. The results for all datasets can be seen in Figure~\ref{fig:aci}. 

The first thing to note is that the curves are not perfectly smooth, due to the randomization process. The general behaviour is however quite clear, for every number of classes. Overall, the results are similar for all datasets and show that the $\acc$ allows to discriminate between $\lambda = 0$ for which we have higher values than for $\lambda = 1$. The result is particularly clear for the bands and checkerboard datasets, and less so for the circle.  
\begin{figure}
\label{fig:acc}
\includegraphics[width=\columnwidth]{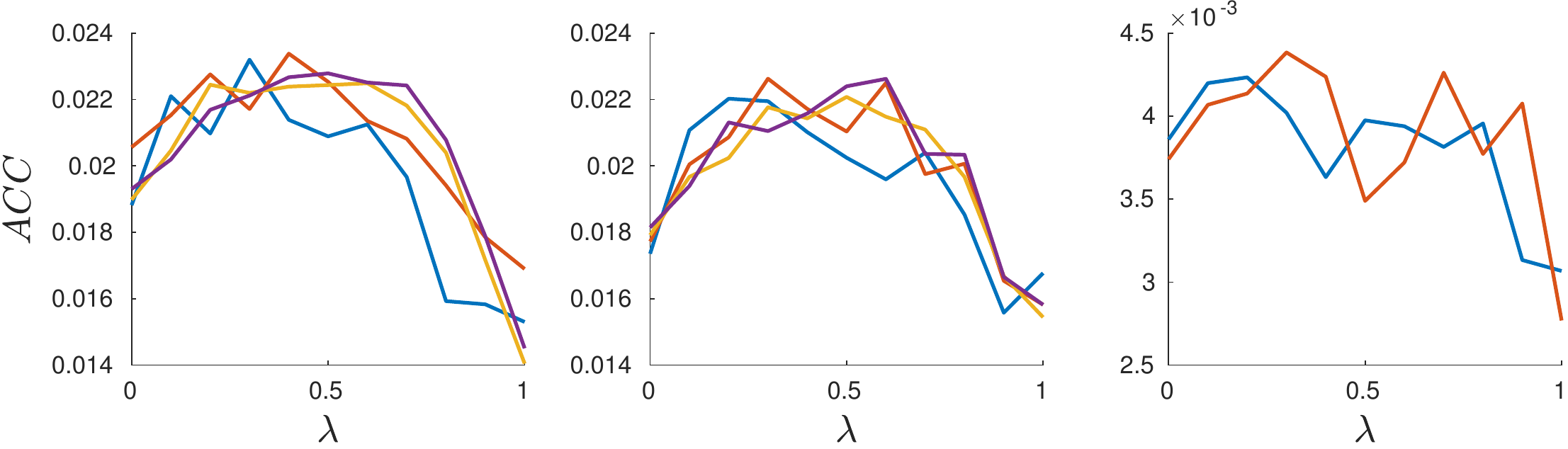}
\caption{ACC results on synthetic data for the bands (left), circle (middle) and checkerboard (right). The colors indicate the number of classes, blue $=2$, orange $=3$, yellow $=4$ and purple $=5$ for left and middle sub-figures, and blue $=4$ and orange $=16$ on the right.}
\end{figure}



\subsection{Real-world datasets visualization}

In this section, we will present two experiments on real-world datasets for visualization tasks. We restrict ourselves to a relatively small dataset $N < 10^5$ as some of the methods we evaluate cannot scale. We use the classical MNIST\footnote{\url{http://yann.lecun.com/exdb/mnist/}} dataset of handwritten digits. It contains 70'000 images of size $28\times28$. Note that for this size of dataset the sketch size was $550$, which means $0.008$\% of the data. 

\subsubsection{Visual comparison of diffusion operators}

In this first experiment, we show the resulting embedding of our proposed method using the different graph diffusion operators introduced in Section~\ref{sec:transductive}. As a baseline, we also included classical Tikhonov diffusion. Also, in addition to the CHD and RKHS methods, we show the result of bootstraping Tikhonov and RKHS diffusion with the result of the CHD. The visualizations provided by the 2D embeddings are shown in Figure~\ref{fig:realdata_variants}. 

Let us begin by inspecting the sketch. The different classes appears to be equally sampled and t-SNE provides a good embedding, while leaving a few overlapping clusters, one splitted class and a few outliers. The Tikhonov and RKHS diffusions achieve a radial separation of the classes but greatly suffer from concentration around zero. The CHD diffusion provides a good embedding similar to the sketch, but tends to produce too much overlaps. The use of bootstraping as displayed in the last two embeddings seem to improve the results of both Tikhonov an RKHS. By visual inspection, CHD appears to be the best diffusion operator, and in general the convex hull constraint seem to be working as expected. 

The quantitative scores for all methods are reported in Table~\ref{tab:acc_aci_variants}. The two worst ACI score are Tikhonov and RKHS, the best one is CHD and the bootstraped diffusion give medium values. This analysis corresponds well to visual inspection. The ACC scores are very similar and cannot discriminate well between the different methods. This is not surprising since there are no big class splits. 

The average timing for the entire process was 161s in total, from which  139s is spent in average on diffusion (step 5 and 6 of Algorithm~\ref{algo:ce}).

\begin{figure}
\includegraphics[width=\columnwidth]{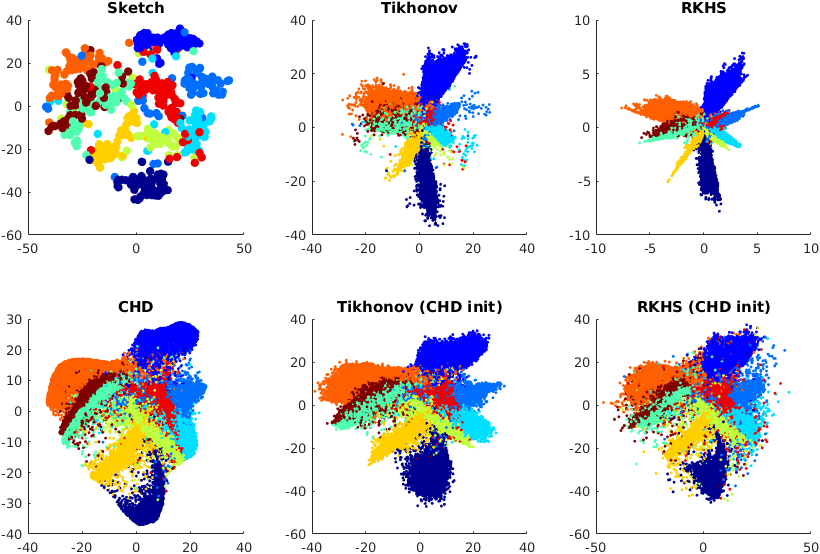}
\caption{MNIST visualisation using the Compressive Embedding method (with t-SNE as the inner embedding algorithm). The different colors corresponds to the ten different classes. }
\label{fig:realdata_variants}
\end{figure}

\begin{table}[t!]
\centering
\begin{tabular}{|c|c||c|c|c|c|c|}
\cline{2-7}
\multicolumn{1}{r|}{} & Sketch & Tikhonov & RKHS & CHD & Tik$+$CHD & RKHS$+$CHD \\ \cline{1-7}
ACI & 2.1035 & 3.2809 & 2.3214 & 1.1054 & 1.9223 & 1.6352 \\ \cline{1-7}
ACC & 0.0125 & 0.0312 & 0.0691 & 0.0490 & 0.0491 & 0.0393 \\ \cline{1-7}
\end{tabular}
\caption{ACI and ACC scores for different diffusion operators}
\label{tab:acc_aci_variants}
\end{table} 

\subsubsection{Original algorithms compared to Compressive Embedding}

In this last experiment, we want to see the behaviour of state-of-the-art and traditional visualization algorithm compared to Compressive Embedding versions. We report the visualizations produced, the computing time and the quantitative scores for four different algorithms : t-SNE\cite{maaten2008visualizing}, LargeVis\cite{tang2016visualizing}, Laplacian Eigenmaps\cite{belkin2003laplacian} and Sammon mapping\cite{sammon1969nonlinear}. 

The 2D embeddings produced are shown in Figure~\ref{fig:realdata_comparison}. If we first look at the original algorithms we can see that both t-SNE and LargeVis produce good embeddings as classes are well separated and clusters are strongly defined. A class split occurs for t-SNE and the repartition is not well balanced for LargeVis but the result is overall very good. Laplacian Eigenmaps gives a fair result but suffers from overlaps and concentration around zero. Sammon Mapping is not shown because the original implementation does not scale enough to complete on a dataset of this size. 

Now looking at the sketches we see that both t-SNE and LargeVis produce reasonably good embeddings while leaving a few overlaps, class splits and outliers. Laplacian Eigenmaps suffers from a bit of concentration around zero and tends to mix a few classes together. Sammon Mapping gives a result in which classes are fairly mixed and does not produce well defined clusters. 

Finally, the results of the CHD diffusion from the sketches is very consistant accross the different algorithms. Overall CE on t-SNE and LargeVis is quite satisfactory, giving well defined clusters. The downside being too much overlap and a lot of sparse outliers. While being satisfactory, the resulting embeddings are visually less good than their original counterparts. For Laplacian Eigenmaps the CE is very similar to the sketch and difficult to distinguish from its original counterpart. The CE of the Sammon Mapping is surprisingly good given the low quality of the sketch. Visually the result is better after diffusion, as the clusters are reasonably well defined. The problem of overlapping classes and sparse noise is still present. 

The ACI and ACC scores for all methods are reported in Table~\ref{tab:aci_realdata} and Table~\ref{tab:acc_realdata}. The lowest ACI are for original t-SNE and LargeVis, the second two best results are for CE t-SNE and CE LargeVis. Next, Laplacian Eigenmaps in its original implementation and with CE give similar ACI scores. Finally, Sammon Mapping gives the worst score. All values are very consistent with the visual inspection and tend to validate the use of the ACI as a quantitative measure for embedding quality evaluation. The values reported for the ACC are very similar and do not allow for a very good discrimination since no case of good clustering with class-split was present. 

Finally, the computing time is reported in Table~\ref{tab:timing_comparison}. For both t-SNE and Laplacian Eigenmaps, CE is one order of magnitude faster than the original implementations. In the case of LargeVis, the CE implementation is still faster but of a smaller factor. However, we need to evaluate this with caution as the original implementation of LargeVis is multi-threaded while all others implementations (including CE) is mono-thread. Taking into account the mono-thread computing time of LargeVis we go back to an order of magnitude acceleration.  

\begin{figure}
\includegraphics[width=\columnwidth]{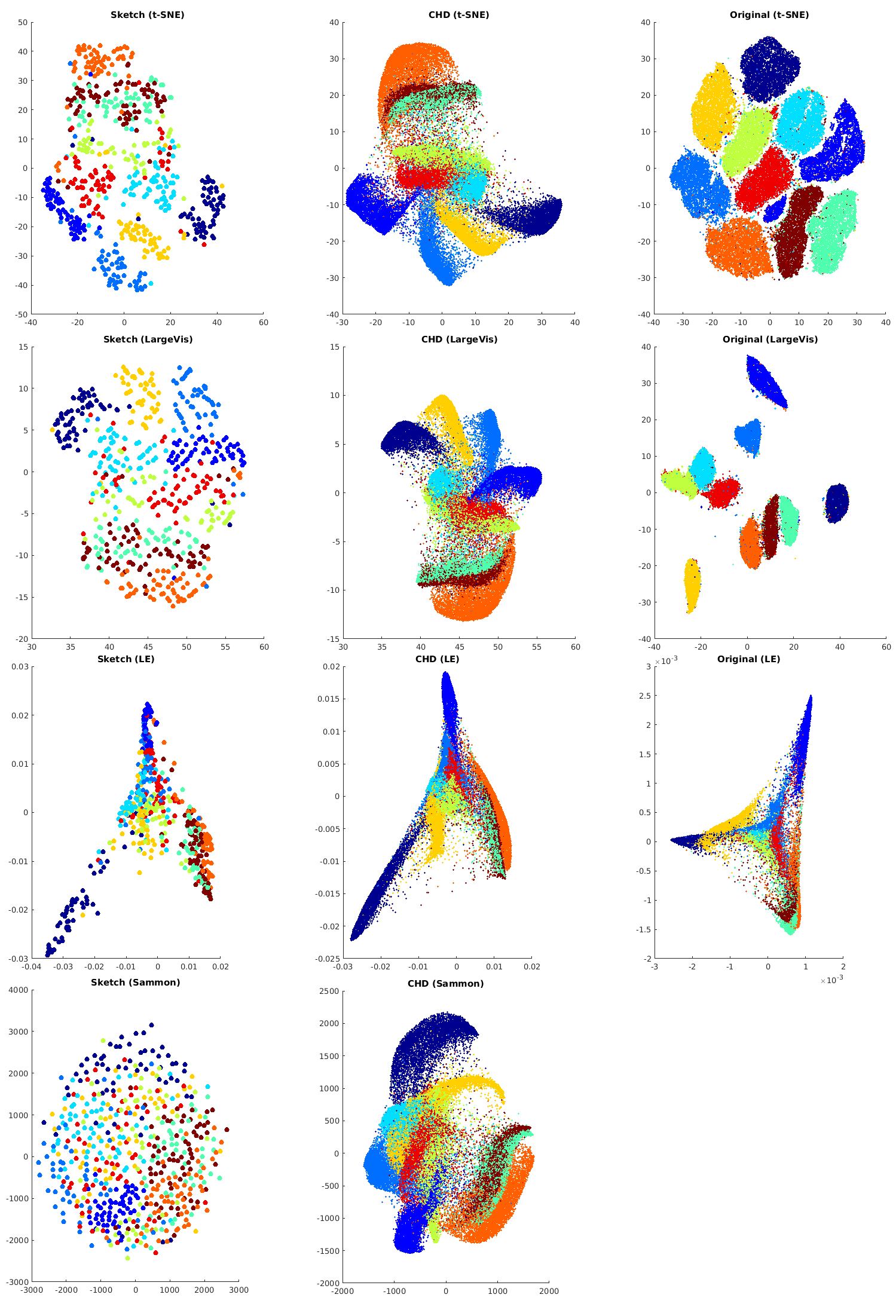}
\caption{MNIST visualization using different embedding algorithms both in their original implementations (right column) and using Compressive Embedding as an accelerator (middle column). The left column shows the result of the embedding algorithm on the sketch only. }
\label{fig:realdata_comparison}
\end{figure}

\begin{table}[t!]
\centering
\begin{threeparttable}
\begin{tabular}{|c|c|c|c|c|}
\cline{2-5}
\multicolumn{1}{r|}{ACI} & t-SNE & Laplacian Eigenmaps & Sammon Mapping & LargeVis \\ \cline{1-5}
Original & 0.30 & 2.88 & -\tnote{1} & 0.45 \\ \cline{1-5}
CE & 1.98 & 2.95 & 3.36 & 2.19 \\ \cline{1-5}
\end{tabular}
\caption{ACI scores comparison between original implementations and Compressive Embedding acceleration. }
\label{tab:aci_realdata}
\begin{tablenotes}
\item[1] exceeded the maximum memory available (128 GB)
\end{tablenotes}
\end{threeparttable}
\end{table}

\begin{table}[t!]
\centering
\begin{threeparttable}
\begin{tabular}{|c|c|c|c|c|}
\cline{2-5}
\multicolumn{1}{r|}{ACC} & t-SNE & Laplacian Eigenmaps & Sammon Mapping & LargeVis \\ \cline{1-5}
Original & 0.04 & 0.04 & -\tnote{1} & 0.03 \\ \cline{1-5}
CE & 0.05 & 0.05 & 0.04 & 0.04 \\ \cline{1-5}
\end{tabular}
\caption{ACC scores comparison between original implementations and Compressive Embedding acceleration. }
\label{tab:acc_realdata}
\begin{tablenotes}
\item[1] exceeded the maximum memory available (128 GB)
\end{tablenotes}
\end{threeparttable}
\end{table}

\begin{table}[t!]
\centering
\begin{threeparttable}
\begin{tabular}{|c|c|c|c|c|}
\cline{2-5}
\multicolumn{1}{r|}{Time [s]} & t-SNE & Laplacian Eigenmaps & Sammon Mapping & LargeVis \\ \cline{1-5}
Original & 1815 & 1666 & -\tnote{1} & 660\tnote{2} \\ \cline{1-5}
CE & 157 & 155 & 166 & 329 \\ \cline{1-5}
\end{tabular}
\caption{Computing time comparison between original implementations and Compressive Embedding acceleration. }
\label{tab:timing_comparison}
\begin{tablenotes}
\item[1] exceeded the maximum memory available (128 GB)
\item[2] the default implementation uses parallelism, the single thread time usage is 4090s. 
\end{tablenotes}
\end{threeparttable}
\end{table}


\section{Conclusion} \label{sec:conclusion}
In this contribution, we have presented a general framework for the acceleration of embedding and visualization algorithms. Our method is made possible by the use of similarity graphs, efficient sampling and graph diffusion. We showed how the method worked on real-world examples and that it gives satisfactory results while being one order of magnitude faster than original implementations. In future works we would like to evaluate active techniques both for sampling and for diffusion.

\appendix

\appendix

\section{Proofs}

\paragraph{Important lemmas.}
Let us first recall two important lemmas necessary for the proofs. The first one is a generalization of the Bernstein inequality for matrices.
\begin{lemma}[Matrix Bernstein: Bounded Case]\cite[Theorem 6.1]{tropp2012user} \label{lemma:bernstein_bounded}
Consider a finite sequence $\X_m$ of independent, random, self-adjoint
matrices with dimension $d$. Assume that
\begin{equation*}
\Esp{\X_m}=0 \quad \text{and} \quad \smax(\X_m) \leq R \quad \text{almost surely.}
\end{equation*}
Compute the norm of the total variance, 
\begin{equation*}
A^{2}:=\norm{\sum_m\Esp{\X_m^2}}_{op}
\end{equation*}
Then the following chain of inequalities holds for all $\delta\geq0$.
\begin{eqnarray*}
\Prob{\lmax\left( \sum_m\b{X}_m \right) \geq \delta} & \leq & d \cdot \exp\left(-\frac{A^2}{R^{2}}\cdot h\left(\frac{R\delta}{A^2}\right)\right)\\
 & \leq & d\cdot\exp\left(\frac{-\delta^{2}/2}{A^2+R\delta/3}\right)\\
 & \leq & \begin{cases}
d\cdot\exp(-3\delta^{2}/8 A^{2}) & \quad\text{for \ensuremath{\delta\leq A^2/R};}\\
d\cdot\exp(-3\delta/8R) & \quad\text{for \ensuremath{\delta\geq A^2/R}}.
\end{cases}
\end{eqnarray*}
where the function $h$ is defined as $h(u):=(1+u)\log(1+u)-u$ for $u\geq0$. 
\end{lemma}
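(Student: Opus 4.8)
The plan is to prove the inequality by the \emph{matrix Laplace transform method} combined with \emph{Lieb's concavity theorem}, the standard route for Bernstein-type matrix concentration bounds.

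First I would reduce the tail to a trace moment generating function. Writing $Y=\sum_m\X_m$ and applying Markov's inequality to $e^{\theta\lmax(Y)}$ for a free parameter $\theta>0$, together with the elementary bound $e^{\theta\lmax(Y)}=\lmax(e^{\theta Y})\leq\mathrm{tr}\,e^{\theta Y}$ (valid since $e^{\theta Y}$ is positive definite), one obtains
\begin{equation*}
\Prob{\lmax\left(\sum_m \X_m\right)\geq\delta}\leq e^{-\theta\delta}\,\Esp{\mathrm{tr}\exp\left(\theta\sum_m\X_m\right)}.
\end{equation*}

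The genuine obstacle is the second step: controlling the expected trace exponential of the \emph{sum} by the individual summands, since scalar moment generating function subadditivity has no naive matrix analogue. Here I would invoke Lieb's theorem (concavity of $A\mapsto\mathrm{tr}\exp(H+\log A)$ on positive definite $A$), and peel off the independent summands one at a time via Jensen's inequality, to obtain the subadditivity of matrix cumulant generating functions
\begin{equation*}
\Esp{\mathrm{tr}\exp\left(\theta\sum_m\X_m\right)}\leq\mathrm{tr}\exp\left(\sum_m\log\Esp{e^{\theta\X_m}}\right).
\end{equation*}

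Third, I would bound each matrix moment generating function. From the scalar estimate $e^{\theta x}\leq 1+\theta x+\frac{e^{\theta R}-\theta R-1}{R^2}x^2$, which holds for all $x\leq R$ and $\theta>0$ because $x\mapsto(e^{\theta x}-\theta x-1)/x^2$ is increasing, functional calculus gives the operator inequality $e^{\theta\X_m}\preceq\b{I}+\theta\X_m+\frac{e^{\theta R}-\theta R-1}{R^2}\X_m^2$. Taking expectations and using $\Esp{\X_m}=0$, then $\log(\b{I}+A)\preceq A$, yields $\log\Esp{e^{\theta\X_m}}\preceq\frac{e^{\theta R}-\theta R-1}{R^2}\Esp{\X_m^2}$.

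Finally I would assemble the pieces. Summing, using monotonicity of the trace exponential, the bound $\mathrm{tr}\exp(Z)\leq d\cdot e^{\lmax(Z)}$ for self-adjoint $Z$, and the definition $A^2=\norm{\sum_m\Esp{\X_m^2}}_{op}$, the tail bound becomes
\begin{equation*}
\Prob{\lmax\left(\sum_m\X_m\right)\geq\delta}\leq d\cdot\inf_{\theta>0}\exp\left(-\theta\delta+\frac{e^{\theta R}-\theta R-1}{R^2}A^2\right).
\end{equation*}
The choice $\theta=\frac{1}{R}\log\left(1+\frac{R\delta}{A^2}\right)$ turns the exponent into exactly $-\frac{A^2}{R^2}h\left(\frac{R\delta}{A^2}\right)$, giving the first displayed inequality; the remaining two follow from the elementary estimate $h(u)\geq\frac{u^2/2}{1+u/3}$ and a case split at $\delta=A^2/R$.
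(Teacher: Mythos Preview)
Your sketch is correct and is precisely the standard proof of the matrix Bernstein inequality as given in the cited reference \cite{tropp2012user}. Note, however, that the paper does not supply its own proof of this lemma: it is quoted verbatim from Tropp's survey and used as a black box in the proofs of Theorems~\ref{theo:sampling_gL}--\ref{theo:sampling_tig_approx}, so there is nothing in the paper to compare your argument against beyond the citation itself.
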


The second lemma is a generalization of the triangular inequality for the norm of the localization operator.
\begin{lemma} \label{lemma:norm_tig_triangular_inequality}
Given any continuous kernel $g$ and $g^\prime$, the norm of the localization operator satisfies:
\begin{equation}
\norm{\T_{i}g^{\prime}}_{2}^{2}-\norm{\T_{i}\left(|g^{\prime}|-|g|\right)}_{2}^{2}
\quad \leq \quad \norm{\T_{i}g}_{2}^{2}\quad  \leq \quad 
 \norm{\T_{i}g^{\prime}}_{2}^{2}+\norm{\T_{i}\left(|g^{\prime}|-|g|\right)}_{2}^{2}
\end{equation}
\end{lemma}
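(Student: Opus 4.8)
The plan is to reduce everything to the spectral expression for the squared norm of a localized kernel and then exploit the linearity of the localization operator in its kernel argument. First I would record, exactly as in the derivation of \eqref{eq:kdd_definition_spectral}, that
\[
\norm{\T_i g}_2^2 = \sum_n \left(\sum_\ell g(\lambda_\ell) \b u_\ell^*[i] \b u_\ell[n]\right)^2 = \sum_\ell g(\lambda_\ell)^2 \, \b u_\ell^2[i],
\]
where the cross terms vanish because $\sum_n \b u_\ell[n]\b u_m[n]=\delta_{\ell m}$. The crucial consequence is that $\norm{\T_i g}_2^2$ depends on $g$ only through $g^2=|g|^2$, so $\norm{\T_i g}_2=\norm{\T_i |g|}_2$ for every kernel, and likewise for $g'$.

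Next I would use that $g\mapsto\T_i g$ is linear, so writing $|g| = |g'| - (|g'|-|g|)$ gives the identity $\T_i|g| = \T_i|g'| - \T_i(|g'|-|g|)$ as vectors in $\Rbb^N$. Applying the ordinary triangle inequality and its reverse form to these three vectors in the Euclidean norm, and then replacing $\norm{\T_i|g|}_2,\norm{\T_i|g'|}_2$ by $\norm{\T_i g}_2,\norm{\T_i g'}_2$ via the previous step, yields
\[
\norm{\T_i g'}_2 - \norm{\T_i(|g'|-|g|)}_2 \;\leq\; \norm{\T_i g}_2 \;\leq\; \norm{\T_i g'}_2 + \norm{\T_i(|g'|-|g|)}_2.
\]
This is the generalized triangle inequality for the localization operator announced in the lemma's preamble.

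The hard part will be passing from these norm bounds to the squared-norm bounds as literally stated, since squaring a triangle inequality only gives $\norm{\T_i g}_2^2\leq(\norm{\T_i g'}_2+\norm{\T_i(|g'|-|g|)}_2)^2$, which carries an extra positive cross term and is therefore \emph{weaker} than the claimed bound. To get the stated form I would argue directly in the weighted inner product $\scp{f}{h}_w=\sum_\ell \b u_\ell^2[i]\, f(\lambda_\ell) h(\lambda_\ell)$, which by a computation identical to the one above satisfies $\scp{\T_i f}{\T_i h}_2=\scp{f}{h}_w$. In the regime where the lemma is actually invoked (Theorem~\ref{theo:sampling_tig_approx}, where $g'$ is the rank-$k$ truncation keeping the $k$ largest values of $|g(\lambda_\ell)|$), the kernels $|g'|$ and $|g'|-|g|$ have disjoint spectral supports, so $\scp{|g'|}{|g'|-|g|}_w=0$ and a Pythagorean identity gives the exact decomposition
\[
\norm{\T_i g}_2^2 = \norm{\T_i g'}_2^2 + \norm{\T_i(|g'|-|g|)}_2^2,
\]
from which both squared inequalities follow immediately (the upper one with equality, the lower one trivially). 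For fully general $g,g'$ the squared statement requires controlling the cross term $\scp{|g'|}{|g'|-|g|}_w$, so I would either restrict attention to the truncation setting used downstream — where the disjoint-support argument makes the claim exact — or fall back on the un-squared sandwich established above.
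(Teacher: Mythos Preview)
Your approach differs from the paper's: the paper argues pointwise in the spectral sum $\norm{\T_i g}_2^2=\sum_\ell g^2(\lambda_\ell)\,\b u_\ell^2[i]$, asserting the scalar bound $g^2(\lambda_\ell)\geq g'^2(\lambda_\ell)-(|g(\lambda_\ell)|-|g'(\lambda_\ell)|)^2$ for every $\ell$ and then swapping $g\leftrightarrow g'$ for the other side, whereas you go through the un-squared triangle inequality for $\T_i|g|=\T_i|g'|-\T_i(|g'|-|g|)$ and examine the squaring step separately.

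Your hesitation about the squared statement is warranted and in fact sharper than the paper's treatment: the squared inequality is \emph{false} for arbitrary kernels. Taking $g'=\tfrac12 g$ on any graph gives $\norm{\T_i g'}_2^2+\norm{\T_i(|g'|-|g|)}_2^2=\bigl(\tfrac14+\tfrac14\bigr)\norm{\T_i g}_2^2=\tfrac12\norm{\T_i g}_2^2$, violating the upper bound; taking $g'=2g$ violates the lower bound symmetrically. The paper's pointwise derivation contains a sign slip (an inequality is multiplied by the nonpositive factor $|g|-|g'|$ without reversal), so it does not establish the general claim either. Your disjoint-support Pythagorean argument in the truncation setting of Theorem~\ref{theo:sampling_tig_approx}---where for each $\ell$ either $g'(\lambda_\ell)=g(\lambda_\ell)$ or $g'(\lambda_\ell)=0$---is correct and yields the exact identity $\norm{\T_i g}_2^2=\norm{\T_i g'}_2^2+\norm{\T_i(|g'|-|g|)}_2^2$, which is precisely the bound the proof of Theorem~\ref{theo:sampling_tig_approx} invokes. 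So your proposal delivers the un-squared triangle inequality in full generality and the squared statement in the only regime where it is actually used; that is the honest scope of the lemma.
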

\begin{proof}
From the definition of the localization operator, we have:
\begin{eqnarray}
\norm{\T_{i}g}_{2}^{2} & = & \sum_{\ell=0}^{N-1}g^{2}(\lel)\b{u}_{\ell}^{2}[i] \nonumber \\
 & = & \sum_{\ell=0}^{N-1}\left(g^{2}(\lel)-g^{\prime2}(\lel)\right)\b{u}_{\ell}^{2}[i]+\sum_{\ell=0}^{N-1}g^{\prime2}(\lel)\b{u}_{\ell}^{2}[i] \nonumber\\
 & \geq & \sum_{\ell=0}^{N-1}\left(g(\lel)-g^{\prime}(\lel)\right)^{2}\b{u}_{\ell}^{2}[i]+\sum_{\ell=0}^{N-1}g^{\prime2}(\lel)\b{u}_{\ell}^{2}[i] \label{eq:lemma_triangular_precision}\\
 & = & \norm{\T_{i}g^{\prime}}_{2}+\norm{\T_{i}\left(|g^{\prime}|-|g|\right)}_{2}. \nonumber
\end{eqnarray}
A simple change of variable concludes the proof. The inequality~\ref{eq:lemma_triangular_precision} follows
from the following assertion. For all $\lel$ such that $|g(\lel)|\leq|g^{\prime}(\lel)|$, we have
\begin{eqnarray*}
g^{2}(\lel) & = & g^{2}(\lel)-g^{\prime2}(\lel)+g^{\prime2}(\lel)\\
 & = & \left(|g(\lel)|-|g^{\prime}(\lel)|\right)\left(|g(\lel)|+|g^{\prime}(\lel)|\right)+g^{\prime2}(\lel)\\
 & \geq & -\left(|g^{\prime}(\lel)|-|g(\lel)|\right)\left(|g^{\prime}(\lel)|-|g(\lel)|\right)+g^{\prime2}(\lel)\\
 & = & g^{\prime2}(\lel)-\left(|g(\lel)|-|g^{\prime}(\lel)|\right)^{2}.
\end{eqnarray*}
For the $\lel$ such that $|g^{\prime}(\lel)|\leq|g(\lel)|,$ the inequality  $g^{2}(\lel) \geq g^{\prime2}(\lel)-\left(|g(\lel)|-|g^{\prime}(\lel)|\right)^{2}$ is trivially satisfied. 
\end{proof}

\paragraph{Proof of Theorem~\ref{theo:sampling_gL}}
The proof of Theorem~\ref{theo:sampling_gL} is inspired by~\cite[Theorem 2]{alaoui2014fast} but contains some subtleties.
\begin{proof}
Les us define $\balpha=\Uk^{*}\x$. We first we notice that 
\[
g(\L)\x=\Uk\Uk^{*}\x=\Uk g(\bLambda_{k})\balpha
\]
The quantity of interest is then rewritten as
\begin{eqnarray*}
&  & \frac{1}{M}\norm{\M\P^{-\frac{1}{2}}g(\L)\x}_{2}^{2}-\norm{g(\L)\x}_{2}^{2} \\
 & = & \frac{1}{M}\norm{\M\P^{-\frac{1}{2}}\Uk g(\b{\bLambda_{k}})\balpha}_{2}^{2}-\norm{\Uk g(\b{\bLambda_{k}})\balpha}_{2}^{2}\\
 & = & \balpha^{*}\left(\frac{1}{M}g(\bLambda_{k})\Uk^{*}\P^{-\frac{1}{2}}\M^{*}\M\P^{-\frac{1}{2}}\Uk g(\bLambda_{k})-g(\bLambda_{k})g(\bLambda_{k})\right)\balpha\\
 & = & \balpha^{*}\b{Y}\balpha
\end{eqnarray*}
 where $\b{Y}=\frac{1}{M}g(\bLambda_{k})\Uk^{*}\P^{-\frac{1}{2}}\M^{*}\M\P^{-\frac{1}{2}}\Uk g(\bLambda_{k})-g(\bLambda_{k})g(\bLambda_{k})$
. The remaining of the proof focus in characterizing the maximum and the
minimum eigenvalue of $\b{Y}.$ To do so, we decompose $\b{Y}$ into
a sum of $M$ independent, random, self-adjoint matrices$\X_{i}$
in order to apply Lemma~\ref{lemma:bernstein_bounded}. Let us define 
\[
\X_{i}:=\frac{1}{M}\left(g(\bLambda_{k})\Uk^{*}\left(\frac{\b{\delta}_{\omega_{i}}\b{\delta}_{\omega_{i}}^{*}}{\b{p}_{\omega_{i}}}-\b{I}\right)\Uk g(\bLambda_{k})\right).
\]
It can be verified that 
\begin{eqnarray*}
\b{Y} & =\sum_{i=1}^{M}\X_{i} & =\sum_{i=1}^{M}\left(\frac{1}{M}g(\bLambda_{k})\Uk^{*}\left(\frac{\b{\delta}_{\omega_{i}}\b{\delta}_{\omega_{i}}^{*}}{\b{p}_{\omega_{i}}}-\b{I}\right)\Uk g(\bLambda_{k})\right).
\end{eqnarray*}
By construction, the matrices $\X_{i}$ inherit independence from
the random variables $\b{\delta}_{\omega_{i}}$. Furthermore, we have
\begin{eqnarray*}
\mathbb{E}\left[\X_{i}\right] & = & \sum_{n=1}^{N}p_{n}\frac{1}{M}\left(g(\bLambda_{k})\Uk^{*}\left(\frac{\b{\delta}_{n}\b{\delta}_{n}^{*}}{\b{p}_{n}}-\b{I}\right)\Uk g(\bLambda_{k})\right)\\
 & = & \frac{1}{M}\left(g(\bLambda_{k})\Uk^{*}\left(\sum_{n=1}^{N}\b{\delta}_{n}\b{\delta}_{n}^{*}-\b{I}\right)\Uk g(\bLambda_{k})\right)\\
 & = & \b{0}=\mathbb{E}\left[-\X_{i}\right]
\end{eqnarray*}
To apply Lemma~\ref{lemma:bernstein_bounded} we need the maximum eigenvalue of $\X_{i}$ and
$-\X_{i}$.
\begin{eqnarray*}
\smax(\X_{i}) & = & \smax\left(\frac{1}{M}g(\bLambda_{k})\Uk^{*}\left(\frac{\b{\delta}_{\omega_{i}}\b{\delta}_{\omega_{i}}^{*}}{\b{p}_{\omega_{i}}}-\b{I}\right)\Uk g(\bLambda_{k})\right)\\
 & \leq & \frac{1}{M}\smax\left(\frac{1}{\b{p}_{\omega_{i}}}g(\bLambda_{k})\Uk^{*}\b{\delta}_{\omega_{i}}\b{\delta}_{\omega_{i}}^{*}\Uk g(\bLambda_{k})\right)\\
 & = & \frac{1}{M}\smax\left(\frac{1}{\b{p}_{\omega_{i}}}\b{\delta}_{\omega_{i}}^{*}\Uk g(\bLambda_{k})g(\bLambda_{k})\Uk^{*}\b{\delta}_{\omega_{i}}\right)\\
 & = & \frac{1}{M}\max_{i}\frac{1}{\b{p}_{i}}\b{\delta}_{i}^{*}\Uk g(\bLambda_{k})g(\bLambda_{k})\Uk^{*}\b{\delta}_{i}\\
 & = & \frac{1}{M}\max_{i}\frac{\norm{\T_{i}g}_{2}^{2}}{\b{p}_{_{i}}}
\end{eqnarray*}
\begin{eqnarray*}
\smax(-\X_{i}) & = & \smax\left(\frac{1}{M}g(\bLambda_{k})\Uk^{*}\left(\b{I}-\frac{\b{\delta}_{\omega_{i}}\b{\delta}_{\omega_{i}}^{*}}{\b{p}_{\omega_{i}}}\right)\Uk g(\bLambda_{k})\right)\\
 & \leq & \frac{1}{M}\smax\left(g^{2}(\bLambda)\right)=\frac{1}{M}\norm{g(\blambda)}_{\infty}^{2}
\end{eqnarray*}
Finally, before we can apply Lemma~\ref{lemma:bernstein_bounded}, we need to compute 
\begin{eqnarray*}
A^{2} & = & \smax\left(\mathbb{E}\left[\sum_{i=1}^{M}\X_{i}^{2}\right]\right)\\
 & = & \smax\left(\mathbb{E}\left[\frac{1}{M^{2}}\sum_{i=1}^{M}g(\bLambda_{k})\Uk^{*}\left(\b{I}-\frac{\b{\delta}_{\omega_{i}}\b{\delta}_{\omega_{i}}^{*}}{\b{p}_{\omega_{i}}}\right)\Uk g(\bLambda_{k})g(\bLambda_{k})\Uk^{*}\left(\b{I}-\frac{\b{\delta}_{\omega_{i}}\b{\delta}_{\omega_{i}}^{*}}{\b{p}_{\omega_{i}}}\right)\Uk g(\bLambda_{k})\right]\right)\\
 & = & \frac{1}{M}\smax\left(g(\bLambda_{k})\Uk^{*}\Esp{\left(\b{I}-\frac{\b{\delta}_{\omega_{i}}\b{\delta}_{\omega_{i}}^{*}}{\b{p}_{\omega_{i}}}\right)\Uk g(\bLambda_{k})g(\bLambda_{k})\Uk^{*}\left(\b{I}-\frac{\b{\delta}_{\omega_{i}}\b{\delta}_{\omega_{i}}^{*}}{\b{p}_{\omega_{i}}}\right)}\Uk g(\bLambda_{k})\right)\\
 & = & \frac{1}{M}\smax\left(g(\bLambda_{k})\Uk^{*}\left(\sum_{i=1}^{N}\frac{\norm{\T_{i}g}_{2}^{2}}{\b{p}_{_{i}}}\b{\delta}_{i}\b{\delta}_{i}^{*}\right)\Uk g(\bLambda_{k})\right)\\
 & \leq & \frac{1}{M}\norm{g(\lambda)}_{\infty}^{2}\max_{i}\frac{\norm{\T_{i}g}_{2}^{2}}{\b{p}_{_{i}}},
\end{eqnarray*}
since 
\begin{eqnarray*}
 & & \Esp{\left(\b{I}-\frac{\b{\delta}_{\omega_{i}}\b{\delta}_{\omega_{i}}^{*}}{\b{p}_{\omega_{i}}}\right)\Uk g(\bLambda_{k})g(\bLambda_{k})\Uk^{*}\left(\b{I}-\frac{\b{\delta}_{\omega_{i}}\b{\delta}_{\omega_{i}}^{*}}{\b{p}_{\omega_{i}}}\right)} \\
 & = & \sum_{i=1}^{N}\b{p}_{i}\left(\b{I}-\frac{\b{\delta}_{_{i}}\b{\delta}_{_{i}}^{*}}{\b{p}_{_{i}}}\right)\Uk g^{2}(\bLambda_{k})\Uk^{*}\left(\b{I}-\frac{\b{\delta}_{_{i}}\b{\delta}_{_{i}}^{*}}{\b{p}_{_{i}}}\right)\\
 & = & \sum_{i=1}^{N}\b{p}_{i}\b{\delta}_{_{i}}\b{\delta}_{_{i}}^{*}g^{2}(\L)\b{\delta}_{_{i}}\b{\delta}_{_{i}}^{*}-g^{2}(\L)\\
 & = & \sum_{i=1}^{N}\frac{\norm{\T_{i}g}_{2}^{2}}{\b{p}_{_{i}}}\b{\delta}_{_{i}}\b{\delta}_{_{i}}^{*}-g^{2}(L)\\
 & \preceq & \sum_{i=1}^{N}\frac{\norm{\T_{i}g}_{2}^{2}}{\b{p}_{_{i}}}\delta_{i}\delta_{_{i}}^{T}
\end{eqnarray*}
Let us denote $\max_{i}\frac{\norm{\T_{i}g}_{2}^{2}}{\b{p}_{_{i}}}=\alpha$.
We now apply Lemma~\ref{lemma:bernstein_bounded} to the $\b{Y}=\sum_{i=1}^{M}\X_{i}$ and we find
\[
\Prob{\frac{1}{M}\norm{\M\P^{-\frac{1}{2}}g(\L)\x}_{2}^{2}-\norm{g(\L)\x}_{2}^{2}\geq\delta\norm{\balpha}_{2}^{2}}\leq k\exp\left(-\frac{M\frac{\delta^{2}}{2}}{\alpha\left(\norm{g(\lambda)}_{\infty}^{2}+\frac{\delta}{3}\right)}\right).
\]
Similarly for $-\b{Y}=\sum_{i=1}^{M}-\X_{i},$ we find
\[
\Prob{\norm{g(\L)\x}_{2}^{2}-\frac{1}{M}\norm{\M\P^{-\frac{1}{2}}g(\L)\x}_{2}^{2}\geq\delta\norm{\balpha}_{2}^{2}}\leq k\exp\left(-\frac{M\frac{\delta^{2}}{2}}{\norm{g(\blambda)}_{\infty}^{2}\left(\alpha+\frac{\delta}{3}\right)}\right).
\]
In order to optimize the bound, we need to minimize $\alpha.$ Thus
we choose $p_{i}=\frac{\norm{\T_{i}g}_{2}^{2}}{\norm{g(\blambda)}_{2}^{2}}$
and we get $\alpha=\norm{g(\blambda)}_{2}^{2}.$ The two previous inequalities become
\[
\Prob{\frac{1}{M}\norm{\M\P^{-\frac{1}{2}}g(\L)\x}_{2}^{2}-\norm{g(\L)\x}_{2}^{2}\geq\delta\norm{\Uk^{*}\x}_{2}^{2}}\leq k\exp\left(-\frac{M\delta^{2}}{2\norm{g(\blambda)}_{2}^{2}\left(\norm{g(\lambda)}_{\infty}^{2}+\frac{\delta}{3}\right)}\right)
\]
\[
\Prob{\norm{g(\L)\x}_{2}^{2}-\frac{1}{M}\norm{\M\P^{-\frac{1}{2}}g(\L)\x}_{2}^{2}\geq\delta\norm{\Uk^{*}\x}_{2}^{2}}\leq k\exp\left(-\frac{M\delta^{2}}{2\norm{g(\blambda)}_{\infty}^{2}\left(\norm{g(\blambda)}_{2}^{2}+\frac{\delta}{3}\right)}\right)
\]
We make the change of variables $\delta^{\prime}\norm{g(\blambda)}_{\infty}^{2}=\delta$
\begin{equation*}
\Prob{\frac{\frac{1}{M}\norm{\M\P^{-\frac{1}{2}}g(\L)\x}_{2}^{2}-\norm{g(\L)\x}_{2}^{2}}{\norm{g(\blambda)}_{\infty}^{2}}\geq\delta^{\prime}\norm{\Uk^{*}\x}_{2}^{2}}\leq k\exp\left(-\frac{1}{2}\frac{\norm{g(\blambda)}_{\infty}^{2}}{\norm{g(\blambda)}_{2}^{2}}\frac{M\delta^{\prime2}}{\left(1+\frac{\delta^{\prime}}{3}\right)}\right) 
\end{equation*}
\begin{align}
\Prob{\frac{\norm{g(\L)\x}_{2}^{2}-\frac{1}{M}\norm{\M\P^{-\frac{1}{2}}g(\L)\x}_{2}^{2}}{\norm{g(\blambda)}_{\infty}^{2}}\geq\delta\norm{\Uk^{*}\x}_{2}^{2} } 
& \leq k\exp\left(-\frac{1}{2}\frac{M\delta^{\prime2}}{\left(\frac{\norm{g(\blambda)}_{2}^{2}}{\norm{g(\blambda)}_{\infty}^{2}}+\frac{\delta^{\prime}}{3}\right)}\right) \nonumber \\
& \leq k\exp\left(-\frac{1}{2}\frac{\norm{g(\blambda)}_{\infty}^{2}}{\norm{g(\blambda)}_{2}^{2}}\frac{M\delta^{\prime2}}{\left(1+\frac{\delta^{\prime}}{3}\right)}\right) \label{eq:one_side_tig_bound}
\end{align}
Finally, we substitute $\delta$ for $\delta^{\prime}$. We set the
success probability of the event
\[
\left|\frac{\frac{1}{m}\norm{\M\P^{-\frac{1}{2}}\U g(\bLambda)\x}_{2}^{2}-\norm{\U g(\bLambda)\x}_{2}^{2}}{\norm{g(\blambda)}_{\infty}^{2}}\right|\geq\delta\|\x\|_{2}^{2}.
\]
 to $1-\epsilon$.
As both sides of the bound have to be taken into account, we need
\[
\frac{\epsilon}{2}\ge k\exp\left(-\frac{1}{2}\frac{\norm{g(\lambda)}_{\infty}^{2}}{\norm{g(\blambda)}_{2}^{2}}\frac{Mt^{2}}{\left(1+\frac{\delta}{3}\right)}\right),
\]
which is equivalent to impose on $M$
\[
M\geq2\frac{1}{\delta^{2}}\frac{\norm{g(\blambda)}_{2}^{2}}{\norm{g(\blambda)}_{\infty}^{2}}\left(1+\frac{\delta}{3}\right)\log\left(\frac{2k}{\epsilon}\right)
\]
\end{proof}

\paragraph{Proof of Theorem~\ref{theo:sampling_tig_exact}}
\begin{proof}
Given $M\geq2\frac{1}{\delta^{2}}\frac{\norm{g(\blambda)}_{2}^{2}}{\norm{g(\blambda)}_{\infty}^{2}}\left(1+\frac{\delta}{3}\right)\log\left(\frac{k}{\epsilon}\right)$, we use~\eqref{eq:one_side_tig_bound} and set $\x = \b{\delta}_i$. Then with a probability $\epsilon$, we have
\[
\frac{\norm{\T_{i}g}_{2}^{2}}{\norm{g(\blambda)}_{\infty}^{2}}
 - \frac{\frac{1}{M}\norm{\M\P^{-\frac{1}{2}}\T_{i}g}_{2}^{2}}{\norm{g(\blambda)}_{\infty}^{2}}
 \geq \delta\norm{\Uk^{*}\b{\delta}_{i}}_{2}^{2}.
\]
As a result, with a probability $1-\epsilon$, we have
\[
\frac{\frac{1}{M}\norm{\M\P^{-\frac{1}{2}}\T_{i}g}_{2}^{2}}{\norm{\T_{i}g}_{2}^{2}}\geq1-\delta \frac{\norm{g(\blambda)}_{\infty}^{2}\norm{\Uk^{*}\b{\delta}_{i}}_{2}^{2}}{\norm{\T_{i}g}_{2}^{2}}.
\]
The change of variable $\delta^{\prime}=\delta\frac{\norm{g(\blambda)}_{\infty}^{2}\norm{\Uk^{*}\b{\delta}_{i}}_{2}^{2}}{\norm{\T_{i}g}_{2}^{2}}$ concludes the proof.
For the factor $\frac{\delta}{3}$, we use the fact that $\frac{\norm{g(\blambda)}_{\infty}^{2}\norm{\Uk^{*}\b{\delta}_{i}}_{2}^{2}}{\norm{\T_{i}g}_{2}^{2}}\geq$1. 
\end{proof}

\paragraph{Proof of Theorem~\ref{theo:sampling_tig_approx}}
\begin{proof}
We first use the fact that $\norm{\b{A}\T_{i}g^{\prime}}_{2} \geq \norm{\b{A}\T_{i}g}_{2}$ for any linear operator $\b{A}$. This comes from the fact that $\T_i g $ for a fixed $i$ can be written as $\b{T}_i g(\blambda)$ where $\b{T}_i$ is a linear operator. We successively apply Theorem~\ref{theo:sampling_tig_exact} and in similar way to Theorem~\ref{theo:sampling_tig_exact}, Equation~\ref{eq:one_side_tig_bound} to obtain
\begin{eqnarray*}
\frac{1}{M}\norm{\M\P^{\frac{1}{2}}\T_{i}g}_{2}^{2} & \geq & \frac{1}{M}\norm{\M\P^{\frac{1}{2}}\T_{i}g^{\prime}}_{2}^{2}\\
 & \geq & \norm{\T_{i}g^{\prime}}_{2}^{2}-\delta\norm{g^{\prime}(\blambda)}_{\infty}^{2}\norm{\Uk^{*}\b{\delta}_{i}}_{2}^{2}\\
 & \geq & \norm{\T_{i}g}_{2}^{2}-\norm{\T_{i}\left(|g^{\prime}|-|g|\right)}_{2}^{2}-\delta\norm{g^\prime(\blambda)}_{\infty}^{2}\norm{\Uk^{*}\b{\delta}_{i}}_{2}^{2},
\end{eqnarray*} for a number of samples 
\begin{equation*}
M\geq2\frac{1}{\delta^{2}}\frac{\norm{g^\prime(\blambda)}_{2}^{2}\norm{g^\prime(\blambda)}_{\infty}^{2}\norm{\Uk^{*}\b{\delta}_{i}}_{2}^{4}}{\norm{\T_{i}g^\prime}_{2}^{4}}\left(1+\frac{\delta}{3}\right)\log\left(\frac{k}{\epsilon}\right).
\end{equation*}
The change of variable $\delta^{\prime}=\delta\frac{\norm{g^\prime(\blambda)}_{\infty}^{2}\norm{\Uk^{*}\b{\delta}_{i}}_{2}^{2}}{\norm{\T_{i}g}_{2}^{2}}$
and the division by $\norm{\T_{i}g}_{2}^{2}$ conclude the proof.
For the factor $\frac{\delta}{3}$, we use the fact that $\frac{\norm{g^\prime(\blambda)}_{\infty}^{2}\norm{\Uk^{*}\b{\delta}_{i}}_{2}^{2}}{\norm{\T_{i}g}_{2}^{2}}\geq$1.
\end{proof}

\section*{Acknowledgment}
We would like to thank Lionel Martin for valuable discussions. 

\bibliographystyle{ieeetr}
\bibliography{biblio}

\end{document}